\newtheorem{Thm}{Theorem}
\definecolor{darkblue}{rgb}{0.0,0.0,0.5}
\newcommand{\phibp}{\phi_{ \hspace{-0.025in}\scalebox{.45}{\text{ BP}}}}
\newcommand{\phics}{\phi_{ \hspace{-0.025in}\scalebox{.45}{\text{ CS}}}}
\newcommand{\lone}{$\ell_1$-norm }
\DeclareMathOperator{\dom}{\bf Dom}
\DeclarePairedDelimiter\onenorm{\lVert}{\rVert_1}
\DeclarePairedDelimiter\znorm{\lVert}{\rVert_0}
 \def\0{{\bf 0}}
\def\qed{\hfill\hbox{${\vcenter{\vbox{
    \hrule height 0.4pt\hbox{\vrule width 0.4pt height 6pt
    \kern5pt\vrule width 0.4pt}\hrule height 0.4pt}}}$}}
\definecolor{myred}{rgb}{0.3,0.0,0.7}
\definecolor{dkg}{rgb}{0.1,0.7,0.2}
\definecolor{dkb}{rgb}{0.0,0.2,0.8}
\def\bfone{{\mathbf{1}}}
\def\bfu{{\mathbf u}}
\def\bfv{{\mathbf v}}
\def\Dc{{\cal D}}
\def\Hc{{\cal H}}
\def\Lc{{\cal L}}
\def\Rbb{{\mathbb R}}
\newcommand{\bprf}{\begin{proof}}
\newcommand{\eprf}{\end{proof}}
\newcommand{\bt}{\begin{theorem}}
\newcommand{\et}{\end{theorem}}
\def\beq{\begin{equation}\begin{aligned}}
\def\eeq{\end{aligned}\end{equation}\noindent}
\def\beqq{\begin{equation*}\begin{aligned}}
\def\eeqq{\end{aligned}\end{equation*}\noindent}
\def\beqn{\begin{eqnarray}}
\def\eeqn{\end{eqnarray} \noindent}
\begin{document}

\title{Cost-sensitive Support Vector Machines}

\author{\name Hamed~Masnadi-Shirazi\email hmasnadi@shirazu.ac.ir\\
				\addr Department of Electrical Engineering, Shiraz University, Shiraz, Iran\\
       \name Nuno~Vasconcelos\email nuno@ucsd.edu\\
       \addr Statistical Visual Computing Laboratory, University of California, San Diego, La Jolla, CA~~92039\\
       \name Arya~Iranmehr \email airanmehr@ucsd.edu \\
       \addr Department of Electrical and Computer Engineering, University of California, San Diego, La Jolla, CA~~92039\\
			}

\editor{}

\maketitle

\begin{abstract}%   <- trailing '%' for backward compatibility of .sty file
A new procedure for learning cost-sensitive SVM(CS-SVM) classifiers is
proposed. The SVM hinge loss is extended to the cost sensitive
setting, and the CS-SVM is derived as the minimizer
of the associated risk. The extension of the hinge loss 
draws on recent connections between risk minimization and 
probability elicitation. These connections are generalized to
cost-sensitive classification, in a manner that guarantees
consistency with the cost-sensitive Bayes risk, and 
associated Bayes decision rule. This ensures that optimal decision
rules, under the new hinge loss, implement the Bayes-optimal cost-sensitive 
classification boundary.  Minimization of the new hinge loss is shown to
be a generalization of the classic SVM optimization problem, and can be 
solved by identical procedures. 
The dual problem of CS-SVM is carefully scrutinized by means of regularization theory and sensitivity analysis and the CS-SVM algorithm is substantiated.
The proposed algorithm is also extended to  cost-sensitive learning with example dependent costs.
The minimum cost sensitive risk is proposed as the performance measure and is connected to ROC analysis through vector optimization. 
The resulting algorithm avoids the
shortcomings of previous approaches to cost-sensitive SVM design, and is shown to have
superior experimental performance on a large number of cost sensitive and imbalanced datasets.
\end{abstract}

\begin{keywords}
  Cost Sensitive Learning, SVM, probability elicitation, Bayes consistent loss
\end{keywords}

\section{Introduction}
The most popular strategy for the design of classification algorithms 
is to minimize the probability of error, assuming that all misclassifications
have the same cost. The resulting decision rules are usually denoted
as {\it cost-insensitive\/}. However, in many important applications of machine
learning, such as medical diagnosis, fraud detection, or business decision 
making, certain types of error are much more costly than others.
Other applications involve significantly unbalanced datasets,
where examples from different classes appear with substantially different 
probability. It is well known, from Bayesian decision theory, that under
any of these two situations (uneven costs or probabilities), the 
optimal decision rule deviates from the optimal cost-insensitive rule 
in the same manner. In both cases, reliance on cost insensitive 
algorithms for classifier design can be highly sub-optimal. While
this makes it obviously important to develop {\it cost-sensitive\/}
extensions of state-of-the-art machine learning techniques, the current
understanding of such extensions is limited.

In this work we consider the support vector machine (SVM) architecture \Citet{SVN}.
Although SVMs are based on a very solid learning-theoretic foundation, and 
have been successfully applied to many classification problems,  
it is not well understood how to design cost-sensitive extensions of
the SVM learning algorithm. The standard, or cost-insensitive, SVM
is based on the minimization of a symmetric loss function (the hinge loss)
that does not have an obvious cost-sensitive generalization. 
In the literature, this problem has been addressed by various
approaches, which can be grouped into three general categories. The first
is to address the problem as one of data processing, by adopting
resampling techniques that under-sample the majority class 
and/or over-sample the minority class
%\Citet{SVM-SMOTE,SVM-GAsample,SVM-Lessmann,SVM-Apply,SVM-EUS}. 
\Citet{CurseImbal,SVM-SMOTE,SVM-Apply,PerceptSVM,ED-03}.
Resampling is not easy when the classification unbalance is due to
either different misclassification costs (not clear what the class
probabilities should be) or an extreme unbalance in class 
probabilities (sample starvation for classes of 
very low probability). 
It also does not guarantee that the learned SVM will change, since it
could have no effect on the support vectors. 
Active learning based methods have also been proposed to train the SVM algorithm on the informative instances, instances which are close to the hyperplane \Citet{Active-Imbalanced}.

The second class of approaches
%\Citet{SVM-KBA1,SVM-KBA2,SVM-KBA3}
\Citet{KernelMod,SVM-KBA2,SVM-KBA1}
involve kernel modifications. These methods are based on conformal 
transformations of the input or feature space, by modifying the 
kernel used by the SVM. They are somewhat unsatisfactory, due to the 
implicit assumption that a linear SVM cannot be made cost-sensitive. It 
is unclear why this should be the case.

The third, and most widely researched, approach is to modify the SVM 
algorithm in order to achieve cost sensitivity. This is done in one
of two ways. The first is a naive method, known as {\it boundary movement (BM-SVM),\/} 
which shifts the decision boundary by simply adjusting the threshold 
of the standard SVM \Citet{SVM-Bnaive}. Under Bayesian decision theory,
this would be the optimal strategy if the class posterior probabilities
were available. However, it is well known that SVMs do not predict 
these probabilities accurately. 
While a literature has developed in the area of probability 
calibration~\Citet{platt}, calibration techniques do not
aid the cost-sensitive performance of threshold manipulation. This
follows from the fact that all calibration techniques rely on
an invertible (monotonic and one-to-one) transformation of the SVM 
output. 
%textcolor{red}{Moreover, BM-SVM could be regarded as training a \emph{cost-insensitive SVM (CI-SVM)}, then it trades off positive and negative errors on the %cost-insensitive model for better performance. Thus, it is no an appropriate way to incorporate costs in training the model.} 
Because the manipulation of a threshold at either the input or 
output of such a transformation produces the same 
receiver-operating-characteristic (ROC) curve, calibration does not
change cost-sensitive classification performance. The boundary movement method
is also obviously flawed when the data is non-separable, in which
case cost-sensitive optimality is expected to require a modification
of {\it both} the normal of the separating plane $w$ and the classifier 
threshold $b$. The second proposal to modify SVM learning is known as 
the {\it biased penalties (BP-SVM) \/} method
%\Citet{SVM-Asym,SVM-NonStand,SVM-Italy,SVM-Percept,SVM-bio,SVM-Sense,SVM-Disc,SVM-Falarm,SVM-2nu}
\Citet{SVM-Asym,SVM-NonStand,SVM-Falarm,SVM-Newv,LibSVM}.
This consists of introducing different penalty factors $C_1$ and $C_{-1}$ 
for the positive and negative SVM slack variables during training. 
It is implemented by transforming the primal SVM problem into 
\begin{equation}\label{eq:BPSVM}
\begin{aligned}
& \underset{w,b,\xi}{\text{argmin}} \hspace{0.1in}  && \frac{1}{2} ||w||^2 + C \left[ C_1\sum_{\{i|y_i=1\}}\xi_i + C_{-1}\sum_{\{i|y_i=-1\}}\xi_i \right]  \\
& \text{subject to}    &&  y_i(w^Tx+b) \geq 1-\xi_i. 
%&&  (w^Tx+b) \le -\frac{2C_1-1}{3C_1-2}+\xi_i ; y=-1 \\
%&& \xi_i \geq 0
\end{aligned}
\end{equation}
The biased penalties method also suffers from an obvious flaw, which is
converse to that of the boundary movement method: it has limited ability to 
enforce cost-sensitivity when the training data is separable. For large slack penalty $C$,
the slack variables $\xi_i$ are zero-valued and the optimization above 
degenerates into that of the standard SVM, where the decision boundary is 
placed midway between the two classes rather than assigning a larger 
margin to one of them. 
%\textcolor{red}{In addition, for non-separable problems it is not flexible enough to completely implement the concept of \emph{asymmetric margin}. This is because of %the fact that by factorizing one of class costs in \eqref{eq:BPSVM}, e.g. $C_{-1}$, the number of parameter which controls cost-sensitivity becomes one %($\bar{C_1}=C_1/C_{-1}$), which $\bar{C_1}$ just jointly controls slope and hinge point of the Hinge loss of the positive class, which seems restrictive for the %negative class.}

In this work we propose an alternative strategy for the design of 
cost-sensitive SVMs. This strategy is fundamentally different 
from previous attempts, in the sense that is does not directly manipulate 
the standard SVM learning algorithm. Instead, we extend the SVM hinge loss,
and derive the optimal cost-sensitive learning algorithm as the minimizer
of the associated risk. The derivation of the new cost-sensitive hinge loss
draws on recent connections between risk minimization and 
probability elicitation~\Citet{savage-loss}. Such connections are 
generalized to the case of cost-sensitive classification.

It is shown that it is always possible to specify the 
predictor and conditional risk functions desired for
the SVM classifier, and derive the loss for which these are optimal. 
A sufficient condition for the cost-sensitive Bayes-optimality of
the predictor is then provided, as well as necessary conditions for conditional
risks that approximate the cost-sensitive Bayes risk. Together,
these conditions enable the design of a new hinge loss which is minimized by 
an SVM that 1) implements the cost-sensitive Bayes decision rule, and 
2) approximates the cost-sensitive Bayes risk. It is also shown that the 
minimization of this loss is a generalization of the classic SVM 
optimization problem, and can 
be solved by identical procedures. The resulting algorithm avoids the 
shortcomings of previous methods, producing cost-sensitive decision rules 
for {\it both\/} cases of separable and inseparable training data. 
Experimental results show that these advantages result in better
cost-sensitive classification performance than previous solutions.

Since CS-SVM is implemented in the dual, cost-sensitive learning in the dual should be studied more closely. We show that cost-sensitive learning in the dual appears as regularization and changing the constraint's upper bounds which stem from sensitivity analysis. These connections are considered under cost-sensitive learning and imbalanced data learning.

Moreover, we  show that  in the cost-sensitive  and imbalanced data settings, the priors and costs should be incorporated in the performance measure. We propose minimum expected (cost-sensitive) risk  as a cost sensitive  performance metric and demonstrate its connections to the ROC curve. For the case of unknown costs, we introduce a robust measure which reflects the performance of the classifier under a given tolerance of false-positive or false-negative errors.

The paper is organized as follows. Section 2 briefly reviews the 
probability elicitation view of loss function 
design \Citet{savage-loss}. Section 3 then generalizes
the connections between probability elicitation and risk minimization
to the cost-sensitive setting. In Section 4, these connections 
are used to derive the new SVM loss and algorithm. 
In section 5, the dual problem of CS-SVM is thoroughly evaluated in the sense of regularization and sensitivity analysis.
Section 6 presents an extension of CS-SVM for problems with example-dependent costs.
Section 7 proposes minimum cost sensitive risk as a standard measure for examining classifier performance in the cost-sensitive and imbalanced data setting.
Finally, Section 8 presents an experimental evaluation
that demonstrates improved performance of the proposed cost 
sensitive SVM  over previous methods. 

\section{Bayes consistent classifier design}
The goal of classification is to map feature vectors ${\bf x} \in \cal X$  to class labels $y \in \{-1,1\}$.
From a statistical viewpoint, the feature vectors and class labels are drawn from probability distributions 
$P_{\bf X}({\bf x})$ and $P_Y(y)$ respectively.
In terms of functions, we write a classifier as $h({\bf x}) = sign[p({\bf x})]$, where the function $p: {\cal X} \rightarrow \mathbb{R}$
is denoted as the classifier predictor. Given a non-negative function $L(p({\bf x}),y)$ that assigns a loss to each $(p({\bf x}),y)$ pair, the classifier is considered optimal if it minimizes the expected loss $R = E_{{\bf X},Y}[L(p({\bf x}),y)]$, also known as the risk. Minimizing the risk, is itself equivalent to minimizing the conditional risk 
\begin{eqnarray}
 E_{Y|{\bf X}} [L(p({\bf x}),y)|{\bf X} = {\bf x}] = 
 P_{Y|{\bf X}}(1|{\bf x}) L(p({\bf x}),1)  \nonumber \\ + (1-P_{Y|{\bf X}}(1|{\bf x})) L(p({\bf x}),-1),
\label{eq:Cphi}
\end{eqnarray}
for all ${\bf x} \in {\cal X}$. It is discerning to write the predictor function $p({\bf x})$  as a composition of two functions
$p({\bf x}) = f(\eta({\bf x}))$, 
where $\eta({\bf x}) = P_{Y|{\bf X}}(1|{\bf x})$ is the posterior probability , and
$f: [0,1] \rightarrow \mathbb{R}$ is denoted as the {\it link function}. This provides a valuable connection to the Bayes decision rule. 
A loss is considered Bayes consistent when its associated risk is minimized by the BDR. For example the  zero-one loss can be written as
\begin{eqnarray}
L_{0/1}(f,y) &=& \frac{1- sign(yf)}{2} \nonumber \\ &=& \left\{ \begin{array}{ll}
         0, & \text{if $y=sign(f)$};\\
        1, & \text{if $y \ne sign(f)$},\end{array} \right.
\end{eqnarray}
where we omit the dependence on $\bf x$ for notational simplicity.
The conditional risk for this loss function is 
\begin{eqnarray}
  C_{0/1}(\eta,f) &=& \eta \frac{1- sign(f)}{2} + 
  (1-\eta) \frac{1 + sign(f)}{2}
   \nonumber\\ &=& \left\{ \begin{array}{ll}
         1-\eta, & \text{if $f \geq 0 $};\\
        \eta, & \text{if $f<0$}.\end{array} \right.
\end{eqnarray}
This risk is minimized 
by any predictor $f^*$ such that 
\begin{equation}
  \left\{
  \begin{array}{cc}
    f^*({\bf x}) > 0 & \text{if $\eta({\bf x}) > \gamma $} \\
    f^*({\bf x}) = 0 & \text{if $\eta({\bf x}) =  \gamma $} \\
    f^*({\bf x}) < 0 & \text{if $\eta({\bf x}) <  \gamma $} 
  \end{array}
  \right.
  \label{eq:Bayesnec}
\end{equation}
and $\gamma=\frac{1}{2}$.
Examples of optimal predictors include $f^*=2\eta-1$ and
$f^*=\log\frac{\eta}{1-\eta}$. The associated optimal classifier 
$h^* = sign[f^*]$ is the well known Bayes decision rule thus proving that the zero-one loss is Bayes consistent. Finally, the associated 
minimum conditional (zero-one) risk is
\begin{eqnarray}
  C^*_{0/1} (\eta) = \eta\left(\frac{1}{2}-\frac{1}{2}sign(2\eta-1)\right)+ \nonumber \\
           (1-\eta)\left(\frac{1}{2}+\frac{1}{2}sign(2\eta-1)\right).
\end{eqnarray}

A handful of other losses have been  shown to be Bayes consistent. These include  the exponential loss used in boosting classifiers \cite{friedman}, 
logistic loss of logistic regression \cite{friedman, Zhang04}, or the hinge loss of SVMs \cite{Zhang04}. These losses are 
of the form $L_{\phi}(f,y) = \phi(yf)$ for different functions $\phi(\cdot)$ and are known as {\it margin losses}. Margin losses assign a non-zero penalty to small positive $yf$, encouraging the creation of a margin. The resulting large-margin classifiers
have better generalization than those produced by the zero-one loss or other losses that do not enforce a margin ~\cite{book:STL}.
For a margin loss, the 
conditional risk is simply 
\begin{equation}
  C_\phi(\eta,f) = \eta \phi(f) + (1-\eta) \phi(-f).
  \label{eq:CondRisk}
\end{equation}
The conditional risk is minimized by the predictor
\begin{equation}
  f^*_{\phi}(\eta) = \arg\min_{f} C_\phi(\eta,f)
\end{equation}
and the minimum conditional risk 
 is $C^*_\phi(\eta) = C_\phi(\eta,f^*_\phi)$.

Recently, a generative formula for the derivation of novel Bayes consistent loss functions has been presented in \cite{savage-loss} relying on classical probability elicitation in statistics~\Citet{Savage}. Comparable to risk minimization, in probability elicitation, the goal is to find the probability estimator ${\hat \eta}$ that maximizes the expected reward
\begin{equation}
  I(\eta,{\hat \eta}) = \eta I_{1}({\hat \eta}) + (1-\eta) I_{-1}({\hat \eta}),
  \label{eq:expreward}
\end{equation}
where $I_1({\hat \eta})$ is the reward for predicting ${\hat \eta}$ when 
event $y=1$ holds and $I_{-1}({\hat \eta})$ the corresponding reward when
$y=-1$. The functions $I_1(\cdot), I_{-1}(\cdot)$ are such that the 
expected reward is maximal when ${\hat \eta} = \eta$, i.e. 
\begin{equation}
  I(\eta,{\hat \eta}) \leq I(\eta, \eta) = J(\eta), \,\,\, \forall \eta
  \label{eq:Savagebound}
\end{equation}
with equality if and only if ${\hat \eta} = \eta$. 

\begin{Thm}{~\Citet{Savage}}
  Let $  I(\eta,{\hat \eta})$ and $J(\eta)$ be as defined 
  in~(\ref{eq:expreward}) and (\ref{eq:Savagebound}). Then
  1) $J(\eta)$ is convex and 2)~(\ref{eq:Savagebound}) holds if and only if 
  \begin{eqnarray}
    \label{eq:Is}
    I_1(\eta) &=& J(\eta) + (1-\eta) J^\prime(\eta) \label{eq:I1}  
    \label{eq:I1Jprime} \\
    I_{-1}(\eta) &=& J(\eta) -\eta J^\prime(\eta) \label{eq:I-1}.
    \label{eq:I2Jprime}
  \end{eqnarray}
  \label{thm:savage}
\end{Thm}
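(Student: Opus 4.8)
The plan is to prove the two assertions separately, the key observation being that $J$ is, by its definition, a pointwise supremum.

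\textbf{Convexity of $J$.} By~(\ref{eq:Savagebound}) we may write
$J(\eta) = \sup_{\hat\eta} I(\eta,\hat\eta) = \sup_{\hat\eta}\bigl[\eta\, I_1(\hat\eta) + (1-\eta)\, I_{-1}(\hat\eta)\bigr]$.
For each fixed $\hat\eta$, the map $\eta \mapsto \eta\, I_1(\hat\eta) + (1-\eta)\, I_{-1}(\hat\eta)$ is affine in $\eta$, and a pointwise supremum of affine functions is convex; hence $J$ is convex on $[0,1]$. This also guarantees that $J$ is finite and continuous on the interior, so that one-sided derivatives (and, under the differentiability hypothesis implicit in the statement, $J'$) exist there.

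\textbf{The "only if" direction.} Fix $\eta$ in the interior of $[0,1]$. By~(\ref{eq:Savagebound}) the function $\hat\eta \mapsto I(\eta,\hat\eta) = \eta\, I_1(\hat\eta) + (1-\eta)\, I_{-1}(\hat\eta)$ attains its maximum at the interior point $\hat\eta = \eta$, so the first-order condition gives $\eta\, I_1'(\eta) + (1-\eta)\, I_{-1}'(\eta) = 0$. Differentiating the identity $J(\eta) = I(\eta,\eta) = \eta\, I_1(\eta) + (1-\eta)\, I_{-1}(\eta)$ and cancelling the stationary term yields $J'(\eta) = I_1(\eta) - I_{-1}(\eta)$. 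Solving the linear system $\{\,J = \eta\, I_1 + (1-\eta)\, I_{-1},\ \ J' = I_1 - I_{-1}\,\}$ for $I_1$ and $I_{-1}$ produces exactly~(\ref{eq:I1Jprime}) and~(\ref{eq:I2Jprime}).

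\textbf{The "if" direction.} Conversely, assume $I_1, I_{-1}$ are given by~(\ref{eq:I1Jprime})--(\ref{eq:I2Jprime}) with $J$ convex. Substituting and collapsing the cross terms,
\begin{align*}
I(\eta,\hat\eta) &= \eta\bigl[J(\hat\eta) + (1-\hat\eta)J'(\hat\eta)\bigr] + (1-\eta)\bigl[J(\hat\eta) - \hat\eta\, J'(\hat\eta)\bigr] \\
&= J(\hat\eta) + (\eta - \hat\eta)\, J'(\hat\eta).
\end{align*}
The right-hand side is the value at $\eta$ of the tangent line to $J$ at $\hat\eta$, which by convexity lies on or below the graph of $J$; hence $I(\eta,\hat\eta) \le J(\eta) = I(\eta,\eta)$, i.e.~(\ref{eq:Savagebound}) holds.

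The main obstacle is the equality clause of~(\ref{eq:Savagebound}): pinning down that $I(\eta,\hat\eta) = J(\eta)$ \emph{only} when $\hat\eta = \eta$ requires $J$ to be \emph{strictly} convex, since otherwise a supporting line can touch the graph at a whole interval; likewise the "only if" argument tacitly uses differentiability of $I_1, I_{-1}$ and an interior maximum. I would therefore either state these regularity assumptions explicitly or, to treat the non-smooth case, replace $J'$ by an arbitrary selection from the subdifferential $\partial J$ throughout, following~\Citet{Savage}.
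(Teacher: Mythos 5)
The paper does not actually prove this theorem: it is imported verbatim from \Citet{Savage} as a known result (only Theorems~\ref{Thm:TheorySteps}, \ref{Thm:TheoryLink}, \ref{Thm:RiskSymmetryProps} and Lemma~\ref{Thm:LemmaDerivative} are proved in the text and appendix), so there is no internal proof to compare yours against. Your argument is the standard proof of Savage's representation and it is sound: convexity of $J$ as a pointwise supremum of functions affine in $\eta$; the ``only if'' direction by differentiating $J(\eta)=I(\eta,\eta)$ and killing the stationary term via the first-order condition at the interior maximizer $\hat\eta=\eta$, then solving the two-by-two linear system; and the ``if'' direction by recognizing $I(\eta,\hat\eta)=J(\hat\eta)+(\eta-\hat\eta)J'(\hat\eta)$ as the supporting (tangent) line of the convex $J$ at $\hat\eta$, hence $I(\eta,\hat\eta)\le J(\eta)$. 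Your closing caveats are also the right ones and match how the result is used in this paper: the ``equality if and only if $\hat\eta=\eta$'' clause genuinely needs strict convexity of $J$ (a supporting line of a $J$ with an affine piece touches along an interval), and the ``only if'' step assumes differentiability of $I_1,I_{-1}$ and $J$; the general statement replaces $J'$ by a selection from the subdifferential, exactly as you indicate. The paper implicitly operates in this smooth regime (it freely writes $J'$ in (\ref{eq:I1Jprime})--(\ref{eq:I2Jprime}) and later builds non-differentiable hinge-type losses only through the limiting constructions of Theorem~\ref{Thm:TheorySteps}), so flagging rather than resolving these regularity issues is acceptable here.
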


The theorem states that $I_1(\cdot), I_{-1}(\cdot)$ can be derived such that ~(\ref{eq:Savagebound}) 
holds by applying an appropriate convex $J(\eta)$. This primary theorem was used in \cite{savage-loss} to establish the following for margin loss functions.  

\begin{Thm}{~\Citet{savage-loss}}
  Let $J(\eta)$ be as defined in (\ref{eq:Savagebound}) and
  $f$ a continuous function. If the following properties hold
  \begin{enumerate}
  \item $J(\eta) = J(1-\eta)$,
  \item $f$ is invertible with symmetry
    \begin{equation}
      f^{-1}(-v) = 1 -  f^{-1}(v),
      \label{eq:linksym}
    \end{equation}
  \end{enumerate}
  then the functions $I_1(\cdot)$ and $I_{-1}(\cdot)$ derived with
  (\ref{eq:I1Jprime}) and (\ref{eq:I2Jprime}) satisfy the following equalities
  \begin{eqnarray}
    I_1(\eta) &=& -\phi(f(\eta)) \label{eq:I1f}\\
    I_{-1}(\eta) &=& -\phi(-f(\eta)) \label{eq:I-1f},
  \end{eqnarray}
  with
  \begin{equation}
    \phi(v) =  -J[f^{-1}(v)] - (1- f^{-1}(v)) J^\prime[f^{-1}(v)].
    \label{eq:phieq}
  \end{equation}
\end{Thm}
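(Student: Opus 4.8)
The plan is to start from the Savage representation supplied by Theorem~\ref{thm:savage}, i.e. $I_1(\eta) = J(\eta) + (1-\eta)J^\prime(\eta)$ and $I_{-1}(\eta) = J(\eta) - \eta J^\prime(\eta)$ from~(\ref{eq:I1Jprime}) and~(\ref{eq:I2Jprime}), and then verify the two claimed identities by substituting the right arguments into the candidate $\phi$ of~(\ref{eq:phieq}). For the first identity~(\ref{eq:I1f}) I would evaluate $\phi$ at $v = f(\eta)$. Since $f$ is invertible, $f^{-1}(f(\eta)) = \eta$, so~(\ref{eq:phieq}) collapses to $\phi(f(\eta)) = -J(\eta) - (1-\eta)J^\prime(\eta) = -I_1(\eta)$; note this step uses only invertibility of $f$ and neither of the two symmetry hypotheses.

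The symmetry hypotheses enter in the second identity~(\ref{eq:I-1f}). Here I would evaluate $\phi$ at $v = -f(\eta)$ and use the link symmetry~(\ref{eq:linksym}) to write $f^{-1}(-f(\eta)) = 1 - f^{-1}(f(\eta)) = 1-\eta$. Substituting into~(\ref{eq:phieq}) gives $\phi(-f(\eta)) = -J(1-\eta) - \eta\, J^\prime(1-\eta)$. Now I would invoke the symmetry $J(\eta) = J(1-\eta)$ to replace the first term by $-J(\eta)$, and differentiate that same identity (by the chain rule) to obtain $J^\prime(1-\eta) = -J^\prime(\eta)$, which turns the second term into $+\eta\, J^\prime(\eta)$. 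Hence $\phi(-f(\eta)) = -J(\eta) + \eta\, J^\prime(\eta) = -I_{-1}(\eta)$, which is exactly~(\ref{eq:I-1f}).

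Since the argument is essentially a pair of one-line substitutions, there is no deep obstacle; the only point that calls for care is the differentiation of $J(\eta) = J(1-\eta)$. This is legitimate because the very appearance of $J^\prime$ in~(\ref{eq:I1Jprime})–(\ref{eq:I2Jprime}) already presupposes differentiability of $J$ (a convex $J$ is differentiable off a countable set, and the construction is read on the interior where $J^\prime$ exists), and then $\tfrac{d}{d\eta}J(1-\eta) = -J^\prime(1-\eta)$ gives the needed relation. A minor bookkeeping remark is that $-f(\eta)$ must lie in the domain of $\phi$, i.e. in the range of $f$; but~(\ref{eq:linksym}) only makes sense when $f^{-1}$ is defined at $-v$ for every $v$ in the range of $f$, so the range of $f$ is closed under negation and $-f(\eta)$ is indeed admissible. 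With these observations the two equalities, and hence the theorem, follow.
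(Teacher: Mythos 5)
Your verification is correct: both identities follow by the substitutions you describe, and you rightly observe that (\ref{eq:I1f}) is automatic from invertibility alone while the two symmetry hypotheses are only needed for (\ref{eq:I-1f}) via $f^{-1}(-f(\eta))=1-\eta$, $J(1-\eta)=J(\eta)$ and $J^\prime(1-\eta)=-J^\prime(\eta)$. The paper itself states this theorem without proof, importing it from the cited work of \Citet{HamedNunoLossDesign}, and your direct-substitution argument is essentially the same derivation given there, so there is nothing further to reconcile.
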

This theorem provides a generative path for designing Bayes consistent margin loss functions for classification. 
Specifically, any convex  symmetric function $J(\eta) = -C^*_\phi(\eta)$ and invertible function $f^{-1}$ satisfying (\ref{eq:linksym}) can be used in equation (\ref{eq:phieq}) to derive a novel Bayes consistent loss function $\phi(v)$. This is in contrast to previous approaches which require guessing a loss function $\phi(v)$ and checking that it is Bayes consistent by minimizing  $C_\phi(\eta,f)$, so as to obtain whatever optimal predictor $f^*_\phi$ and minimum expected risk $C^*_\phi(\eta)$ results \cite{Zhang04} or methods that restrict the loss function to being convex, differentiable at zero, and have negative derivative at the origin \cite{BartlettJordanMcAuliffe2006}.

\section{ Cost sensitive Bayes consistent classifier design}
\label{sec:Asy}

In this section we extend the connections between risk minimization
and probability elicitation to the cost-sensitive setting.
We start by reviewing the cost-sensitive zero-one loss.

\subsection{Cost-sensitive zero-one loss}

The cost-sensitive extension of the zero-one loss is
\begin{eqnarray}
  && \!\!\!\!\!\!\!\!\! L_{C_1,C_{-1}}(f,y) =  \nonumber \\ 
  && \!\!\!\!\!\!\!\! \frac{1-sign(yf)}{2}\left(C_1 \frac{1-sign(f)}{2}
    + C_{-1} \frac{1+sign(f)}{2}\right)  \nonumber \\  
  && \!\!\!\!\!\!\!\! =\left\{ \begin{array}{ll}
      0, & \text{if $y=sign(f)$};\\
      C_1, & \text{if $y=1$ and $sign(f) = -1$} \\
      C_{-1}, & \text{if $y=-1$ and $sign(f) = 1$},
    \end{array} \right. 
\end{eqnarray}
where $C_1$ is the cost of a false negative and 
$C_{-1}$ that of a false positive. The associated conditional
risk is 
\begin{eqnarray}
  && \!\!\!\!\!\!\!\!\! C_{C_1,C_{-1}}(\eta,f) = \nonumber \\ 
  && \!\!\!\!\!\!\!\!\! C_1 \eta \frac{1- sign(f)}{2} + 
  (1-\eta) C_{-1} \frac{1 + sign(f)}{2} = \nonumber \\
  && \!\!\!\!\!\!\!\!\! = \left\{ \begin{array}{ll}
         C_{-1} (1-\eta), & \text{if $f \geq 0 $};\\
        C_1 \eta, & \text{if $f<0$},\end{array} \right. 
\end{eqnarray}
and is minimized by any predictor that satisfies (\ref{eq:Bayesnec}) 
with $\gamma=\frac{C_{-1}}{C_1 + C_{-1}}$.
%and is minimized by any predictor such that
%\begin{equation}
%  \left\{
%  \begin{array}{cc}
%    f({\bf x}) > 0 & \text{if $\eta({\bf x}) > \frac{C_{-1}}{C_1 + C_{-1}}$} \\
%    f({\bf x}) = 0 & \text{if $\eta({\bf x}) = \frac{C_{-1}}{C_1 + C_{-1}}$} \\
%    f({\bf x}) < 0 & \text{if $\eta({\bf x}) <  \frac{C_{-1}}{C_1 + C_{-1}}$.} 
%  \end{array}
%  \right.
%  \label{eq:BayesnecCosts}
%\end{equation}
Examples of optimal predictors include $f^*(\eta)=(C_1+C_{-1})\eta-C_{-1}$ 
and $f^*(\eta)=\log\frac{\eta C_1}{(1-\eta)C_{-1}}$. The associated 
optimal classifier $h^* = sign[f^*]$ implements the cost-sensitive Bayes decision 
rule, and the associated minimum conditional (cost-sensitive) 
risk is 
\begin{eqnarray}
  C^*_{C_1,C_{-1}} (\eta) = 
  C_1 \eta\left(\frac{1}{2}-
    \frac{1}{2}sign\left[f^*(\eta)\right]\right)+ \nonumber \\
  C_{-1}(1-\eta)\left(\frac{1}{2}+
    \frac{1}{2}sign\left[f^*(\eta)\right]
  \right)
  \label{eq:01minRisk}
\end{eqnarray}
with $f^*(\eta)=(C_1+C_{-1})\eta-C_{-1}$.
We show that the minimum cost sensitive zero-one risk is equivalent to the {\it minimum cost sensitive Bayes error}.
\begin{Thm}
\label{Thm:MinBayesErr}
The minimum risk associated with the cost sensitive zero-one loss is equal to the minimum cost sensitive Bayes error.
\end{Thm}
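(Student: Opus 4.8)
The plan is to show that both quantities named in the statement reduce to the same expectation over $\bf X$, namely $E_{\bf X}\bigl[\min\{C_1\eta({\bf X}),\,C_{-1}(1-\eta({\bf X}))\}\bigr]$. The argument is entirely standard; it is a pointwise minimization on each side together with the observation that the resulting thresholds agree.

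First I would pin down the minimum risk of the cost-sensitive zero-one loss. Since $R=E_{\bf X}\bigl[C_{C_1,C_{-1}}(\eta({\bf X}),p({\bf X}))\bigr]$ and, for each fixed $\bf x$, the value $p({\bf x})$ may be chosen independently, minimizing the risk is equivalent to minimizing the conditional risk for every $\bf x$ — the same reduction already used for the cost-insensitive case. Using the two-branch form displayed above, $C_{C_1,C_{-1}}(\eta,f)=C_{-1}(1-\eta)$ for $f\ge 0$ and $C_1\eta$ for $f<0$, the optimal predictor $f^*(\eta)=(C_1+C_{-1})\eta-C_{-1}$ satisfies $f^*(\eta)\ge 0\iff \eta\ge\frac{C_{-1}}{C_1+C_{-1}}\iff C_{-1}(1-\eta)\le C_1\eta$. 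Hence evaluating (\ref{eq:01minRisk}) at $f^*$ collapses to $C^*_{C_1,C_{-1}}(\eta)=\min\{C_1\eta,\,C_{-1}(1-\eta)\}$, and therefore $\min_p R=E_{\bf X}\bigl[\min\{C_1\eta({\bf X}),\,C_{-1}(1-\eta({\bf X}))\}\bigr]$.

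Next I would write out the cost-sensitive Bayes error directly. A classifier $h=sign[p]$ partitions $\cal X$ into a positive region ${\cal R}_1=\{{\bf x}:p({\bf x})\ge 0\}$ and a negative region ${\cal R}_{-1}$, and its expected cost is the false-positive probability times $C_{-1}$ plus the false-negative probability times $C_1$, i.e. $C_{-1}\int_{{\cal R}_1}(1-\eta({\bf x}))\,dP_{\bf X}({\bf x})+C_1\int_{{\cal R}_{-1}}\eta({\bf x})\,dP_{\bf X}({\bf x})$; equivalently, in class-conditional form, $C_{-1}P_Y(-1)\int_{{\cal R}_1}P_{{\bf X}|Y}({\bf x}|-1)\,d{\bf x}+C_1P_Y(1)\int_{{\cal R}_{-1}}P_{{\bf X}|Y}({\bf x}|1)\,d{\bf x}$, which is the usual cost-weighted error of a decision rule. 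This functional is minimized by assigning each $\bf x$ to whichever region carries the smaller integrand, i.e. to ${\cal R}_1$ precisely when $C_{-1}(1-\eta({\bf x}))\le C_1\eta({\bf x})$, which is the threshold rule (\ref{eq:Bayesnec}) with $\gamma=\frac{C_{-1}}{C_1+C_{-1}}$. The minimum cost-sensitive Bayes error is therefore $E_{\bf X}\bigl[\min\{C_1\eta({\bf X}),\,C_{-1}(1-\eta({\bf X}))\}\bigr]$, the same expression as in the first step, which proves the claim.

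The only delicate points are bookkeeping rather than conceptual: one must verify that the threshold $\gamma=\frac{C_{-1}}{C_1+C_{-1}}$ emerging from $f^*$ in the first step coincides with the threshold that pointwise-minimizes the Bayes-error integral in the second, and check that the behaviour at ties — $f=0$, equivalently $C_{-1}(1-\eta)=C_1\eta$ — is immaterial because both branches give the same value there. One should also state explicitly that exchanging minimization with the expectation over $\bf X$ is legitimate, since $p({\bf x})$ (equivalently, the region to which $\bf x$ is assigned) can be chosen separately for each $\bf x$; this is exactly the reduction from risk to conditional risk already invoked in Section 2.
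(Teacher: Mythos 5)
Your proposal is correct and follows essentially the same route as the paper: both arguments reduce the minimum risk to the expectation of the pointwise minimum $\min\{C_1\eta({\bf x}),\,C_{-1}(1-\eta({\bf x}))\}$, split the integral at the threshold $\gamma=\frac{C_{-1}}{C_1+C_{-1}}$, and identify the result with the cost-weighted false-positive and miss probabilities. The only differences are cosmetic: the paper assumes equal priors without loss of generality and writes the chain of equalities in one direction, whereas you keep the priors $P_Y(\pm 1)$ explicit and additionally verify that the thresholded rule is itself the minimizer of the cost-weighted error, which slightly tightens the identification of the Bayes-error side.
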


\begin{proof}
\begin{eqnarray}
\label{eq:ZeroOneRefinement}
&&R^*_{C_1,C_{-1}}=E_X[C^*_{C_1,C_{-1}}(\eta)]=\int P(x)C^*_{C_1,C_{-1}}(P(1|x)) dx= \\
&&\int_{P(1|x)\geq\gamma} (\frac{P(x|1)+P(x|-1)}{2})(C_{-1}(1-\frac{P(x|1)}{P(x|1)+P(x|-1)})) dx + \\
&&\int_{P(1|x) < \gamma} (\frac{P(x|1)+P(x|-1)}{2})(C_1(\frac{P(x|1)}{P(x|1)+P(x|-1)})) dx =\\
&&\frac{1}{2}\int_{P(1|x)\geq\gamma} C_{-1}P(x|-1) dx +\frac{1}{2}\int_{P(1|x)< \gamma} C_1P(x|1) dx = \\
&&\frac{1}{2}(C_{-1}\epsilon^{\gamma}_1+C_1\epsilon^{\gamma}_2)=\epsilon_{C_1,C_{-1}}
\end{eqnarray}
where $\epsilon^{\gamma}_2$ and $\epsilon^{\gamma}_1$ are the miss rate and false positive rate  associated with the cost sensitive threshold $\gamma$ and $\epsilon_{C_1,C_{-1}}$ is the cost sensitive Bayes error rate. We have also assumed, without loss of generality, that the prior probabilities are equal. 
\end{proof}

The next theorem highlights some fundamental properties of the minimum conditional cost-sensitive zero-one 
risk.
\begin{Thm}
\label{Thm:RiskSymmetryProps}
The risk of (\ref{eq:01minRisk}) 
has the following properties:
\begin{enumerate}
\item a maximum at $\eta^*=\frac{C_{-1}}{C_1 + C_{-1}}$
\item symmetry defined by, $\forall \epsilon \in \left[0, 
    \frac{1}{C_1+C_{-1}}\right]$,
\begin{equation}
 \label{eq:CSsymmetryORIG}
 C^*\left(\eta^* - C_{-1} \epsilon \right) = 
 C^*\left(\eta^* + C_1 \epsilon \right),
  %\,\,\,\, \forall \epsilon \in \left[0, \frac{1}{C_1+C_{-1}}\right]. \nonumber
\end{equation}
\end{enumerate}
\label{thm:Cprops}
\end{Thm}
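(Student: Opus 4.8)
The plan is to first reduce (\ref{eq:01minRisk}) to an explicit piecewise-linear form, and then read off both properties from that form. Since $f^*(\eta)=(C_1+C_{-1})\eta - C_{-1}$, we have $f^*(\eta)\geq 0$ exactly when $\eta\geq \eta^* := \frac{C_{-1}}{C_1+C_{-1}}$. Substituting $\mathrm{sign}[f^*(\eta)]=+1$ for $\eta\geq\eta^*$ and $\mathrm{sign}[f^*(\eta)]=-1$ for $\eta<\eta^*$ into (\ref{eq:01minRisk}) collapses the two reward terms and yields
\begin{equation}
C^*_{C_1,C_{-1}}(\eta)=\begin{cases} C_1\,\eta, & 0\le \eta\le \eta^*,\\[2pt] C_{-1}(1-\eta), & \eta^*\le \eta\le 1,\end{cases}
\end{equation}
and one checks the two branches agree at $\eta^*$, where both equal $\frac{C_1C_{-1}}{C_1+C_{-1}}$, so $C^*$ is continuous.

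For the first property, I would simply note that on $[0,\eta^*]$ the function is strictly increasing (slope $C_1>0$) and on $[\eta^*,1]$ it is strictly decreasing (slope $-C_{-1}<0$); hence the unique maximizer is $\eta=\eta^*$, with maximal value $\frac{C_1C_{-1}}{C_1+C_{-1}}$.

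For the second property, the key bookkeeping step is checking that the range restriction $\epsilon\in\bigl[0,\frac{1}{C_1+C_{-1}}\bigr]$ keeps the two arguments in the intended linear pieces: $C_{-1}\epsilon\le \frac{C_{-1}}{C_1+C_{-1}}=\eta^*$ gives $\eta^*-C_{-1}\epsilon\in[0,\eta^*]$, while $C_1\epsilon\le\frac{C_1}{C_1+C_{-1}}=1-\eta^*$ gives $\eta^*+C_1\epsilon\in[\eta^*,1]$. Then evaluating each side with the appropriate branch,
\begin{equation}
C^*(\eta^*-C_{-1}\epsilon)=C_1(\eta^*-C_{-1}\epsilon)=\frac{C_1C_{-1}}{C_1+C_{-1}}-C_1C_{-1}\epsilon,
\end{equation}
\begin{equation}
C^*(\eta^*+C_1\epsilon)=C_{-1}\bigl(1-\eta^*-C_1\epsilon\bigr)=\frac{C_1C_{-1}}{C_1+C_{-1}}-C_1C_{-1}\epsilon,
\end{equation}
which proves (\ref{eq:CSsymmetryORIG}).

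I do not expect any real obstacle here: the argument is a direct computation once (\ref{eq:01minRisk}) is rewritten in piecewise form. The only point requiring mild care is the endpoint/range bookkeeping in the last paragraph — verifying that the stated interval for $\epsilon$ is exactly the one for which both perturbed posteriors remain valid probabilities lying on the correct side of $\eta^*$ — and observing that the choice of the linear predictor $f^*(\eta)=(C_1+C_{-1})\eta-C_{-1}$ (rather than the logarithmic one) is what makes the threshold crossing clean.
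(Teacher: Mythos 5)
Your proposal is correct and follows essentially the same route as the paper: rewrite (\ref{eq:01minRisk}) in the piecewise-linear form $C_1\eta$ versus $C_{-1}(1-\eta)$ split at $\eta^*$, obtain the maximum from the intersection of the two lines, and verify the symmetry by direct evaluation of both sides, each equaling $\frac{C_1C_{-1}}{C_1+C_{-1}}-C_1C_{-1}\epsilon$. Your explicit check that $\epsilon\in\left[0,\tfrac{1}{C_1+C_{-1}}\right]$ keeps both arguments in $[0,1]$ and on the correct side of $\eta^*$ is a minor tightening of bookkeeping the paper leaves implicit, but it is not a different argument.
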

\begin{proof}
%When $\epsilon=0$ we have the trivial case of 
%$C^*\left(\frac{C_{-1}}{C_1+C_{-1}} \right)=C^*\left(\frac{C_{-1}}{C_1+C_{-1}} \right)$.  
%We consider the case of $0<\epsilon \le \frac{1}{C_1+C_{-1}}$. 
Note that (\ref{eq:01minRisk}) can be written as
\begin{eqnarray}
  C^*_{C_1,C_{-1}} (\eta) = \left\{ \begin{array}{ll}
         C_{-1} (1-\eta), & \text{if $f^* \geq 0 $};\\
        C_1 \eta, & \text{if $f^*<0$},\end{array} \right. 
\end{eqnarray}
The two lines $C_{-1} (1-\eta)$ and $C_1 \eta$ intersect and form the maximum at $\eta=\frac{C_{-1}}{C_1+C_{-1}}$.

When $\epsilon=0$ we have the trivial case of 
$C^*\left(\frac{C_{-1}}{C_1+C_{-1}} \right)=C^*\left(\frac{C_{-1}}{C_1+C_{-1}} \right)$.

When $0<\epsilon \le \frac{1}{C_1+C_{-1}}$ we have $\eta=\frac{C_{-1}}{C_1+C_{-1}} -C_{-1}\epsilon < \frac{C_{-1}}{C_1+C_{-1}}$
in which case from (\ref{eq:Bayesnec}), $f^*<0$ and 
\begin{eqnarray}
C^*_{C_1,C_{-1}} (\eta)=C_1\eta=
C_1\left(\frac{C_{-1}}{C_1+C_{-1}}-C_{-1}\epsilon \right) =\frac{C_1C_{-1}}{C_1+C_{-1}}-C_1C_{-1}\epsilon
\end{eqnarray}

When $0<\epsilon \le \frac{1}{C_1+C_{-1}}$ we also have $\eta=\frac{C_{-1}}{C_1+C_{-1}} +C_1\epsilon > \frac{C_{-1}}{C_1+C_{-1}}$
in which case from (\ref{eq:Bayesnec}), $f^*>0$ and 
\begin{eqnarray}
C^*_{C_1,C_{-1}} (\eta)=C_{-1}(1-\eta)=
C_{-1}\left(1-\frac{C_{-1}}{C_1+C_{-1}}-C_1\epsilon \right) =\frac{C_1C_{-1}}{C_1+C_{-1}}-C_1C_{-1}\epsilon
\end{eqnarray}

Thus proving that 
\begin{eqnarray}
C^*_{C_1,C_{-1}} \left(\frac{C_{-1}}{C_1+C_{-1}} -C_{-1}\epsilon \right)=C^*_{C_1,C_{-1}} \left(\frac{C_{-1}}{C_1+C_{-1}} +C_1\epsilon \right)=\frac{C_1C_{-1}}{C_1+C_{-1}}-C_1C_{-1}\epsilon
\end{eqnarray}

%When $0<\epsilon \le \frac{1}{C_1+C_{-1}}$ we can also write $\frac{C_{-1}}{C_1+C_{-1}} +C_1\epsilon > %\frac{C_{-1}}{C_1+C_{-1}}$
%in which case $f^*>0$ and $C^*_{C_1,C_{-1}} (\eta)=C_{-1} (1-\eta)=C_1(\frac{C_{-1}}{C_1+C_{-1}} %-C_{-1}\eta)=\frac{C_1C_{-1}}{C_1+C_{-1}}$.
\end{proof}

As noted by the following lemma, property 2. 
%theorem~\ref{Thm:RiskSymmetryProps} 
is in fact a generalization of property 1. 
\begin{lemma}
\label{Thm:LemmaDerivative}
Any concave function with the symmetry of (\ref{eq:CSsymmetryORIG}) 
also has property 1. of Theorem~\ref{Thm:RiskSymmetryProps}.
\end{lemma}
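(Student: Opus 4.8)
The plan is to exploit concavity through a two-point Jensen inequality applied to exactly the pair of arguments that the symmetry relation identifies. First I would observe that the candidate maximizer $\eta^{*}=\frac{C_{-1}}{C_{1}+C_{-1}}$ is itself a convex combination of the two points appearing in (\ref{eq:CSsymmetryORIG}). Writing $a:=\eta^{*}-C_{-1}\epsilon$ and $b:=\eta^{*}+C_{1}\epsilon$ for a fixed $\epsilon\in\left[0,\frac{1}{C_{1}+C_{-1}}\right]$, a one-line computation of the balancing weights gives $\lambda a+(1-\lambda)b=\eta^{*}$ with $\lambda=\frac{C_{1}}{C_{1}+C_{-1}}$ and $1-\lambda=\frac{C_{-1}}{C_{1}+C_{-1}}$; the displacements $-C_{-1}\epsilon$ and $+C_{1}\epsilon$ are precisely cancelled by these reciprocal weights.

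Next I would invoke concavity of $C^{*}$: since $\eta^{*}=\lambda a+(1-\lambda)b$, concavity yields $C^{*}(\eta^{*})\ge \lambda\,C^{*}(a)+(1-\lambda)\,C^{*}(b)$. The symmetry hypothesis (\ref{eq:CSsymmetryORIG}) asserts $C^{*}(a)=C^{*}(b)$, so the right-hand side collapses to $C^{*}(a)=C^{*}(b)$, giving $C^{*}(\eta^{*})\ge C^{*}(a)=C^{*}(b)$ for every admissible $\epsilon$.

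Finally I would argue that this pointwise bound is in fact global. As $\epsilon$ ranges over $\left[0,\frac{1}{C_{1}+C_{-1}}\right]$, the point $a=\eta^{*}-C_{-1}\epsilon$ sweeps the interval $[0,\eta^{*}]$ and $b=\eta^{*}+C_{1}\epsilon$ sweeps $[\eta^{*},1]$ (the extreme values $\epsilon=0$ and $\epsilon=\frac{1}{C_{1}+C_{-1}}$ giving $\eta^{*}$ and the endpoints $0,1$ respectively). Hence every $\eta\in[0,1]$ is realized as such an $a$ or $b$, so $C^{*}(\eta^{*})\ge C^{*}(\eta)$ for all $\eta$, which is property~1 of Theorem~\ref{Thm:RiskSymmetryProps}. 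There is no genuinely hard step here; the only points requiring care are the verification of the convex-combination weights and the check that the admissible range of $\epsilon$ covers all of $[0,1]$, so that the local inequality obtained from Jensen upgrades to a statement about the global maximum.
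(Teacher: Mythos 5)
Your proof is correct, and it takes a genuinely different route from the paper's. You exploit the fact that $\eta^{*}=\frac{C_{-1}}{C_{1}+C_{-1}}$ is the convex combination $\lambda(\eta^{*}-C_{-1}\epsilon)+(1-\lambda)(\eta^{*}+C_{1}\epsilon)$ with $\lambda=\frac{C_{1}}{C_{1}+C_{-1}}$, apply concavity (two-point Jensen), collapse the right-hand side using the symmetry $C^{*}(\eta^{*}-C_{-1}\epsilon)=C^{*}(\eta^{*}+C_{1}\epsilon)$, and then note that as $\epsilon$ ranges over $\left[0,\frac{1}{C_{1}+C_{-1}}\right]$ the two arguments sweep all of $[0,\eta^{*}]$ and $[\eta^{*},1]$, so the bound $C^{*}(\eta^{*})\ge C^{*}(\eta)$ is global. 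The paper instead differentiates the symmetry relation (\ref{eq:CSsymmetryORIG}) with respect to $\epsilon$ at $\epsilon=0$, obtains $C^{*\prime}(\eta^{*})(-C_{-1})=C^{*\prime}(\eta^{*})(C_{1})$, concludes $C^{*\prime}(\eta^{*})=0$, and invokes concavity to upgrade stationarity to a maximum. Your argument buys two things: it requires no differentiability of $C^{*}$ at $\eta^{*}$ (notable, since the motivating example, the minimum cost-sensitive zero-one risk, is a piecewise-linear tent with a kink exactly at $\eta^{*}$, where the paper's derivative step is only formal), and it yields the global maximum over $[0,1]$ directly rather than via a stationary-point argument. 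The paper's calculation is shorter when $C^{*}$ is smooth, but yours is the more robust proof of the lemma as stated.
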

\begin{proof}
Taking the derivative of (\ref{eq:CSsymmetryORIG}) at $\epsilon=0$ leads to
\begin{eqnarray}
  C^{*'}\left(\frac{C_{-1}}{C_1+C_{-1}}  \right)(-C_{-1}) = 
   C^{*'}\left(\frac{C_{-1}}{C_1+C_{-1}}  \right)(C_1)
\end{eqnarray}
which is satisfied only when $C^{*'}\left(\frac{C_{-1}}{C_1+C_{-1}}\right)=0$. Given that $C^*$ is a concave function, $C^*$ is maximum at $\eta=\frac{C_{-1}}{C_2+C_{-1}}$.
\end{proof}

\subsection{Cost-sensitive Bayes consistent margin losses}

We extend the other losses used in machine learning to the cost-sensitive
paradigm by introducing the following set of margin loss function
\begin{eqnarray}
  L_{\phi,C_1,C_{-1}}(f,y) &=& \phi_{C_1,C_{-1}}(yf) \nonumber \\
    &=& \left\{
      \begin{array}{ll}
        \phi_1(f), & \text{if $y=1$} \\
        \phi_{-1}(-f), & \text{if $y=-1$}.
      \end{array}        \right.
    \label{eq:CSloss}
\end{eqnarray}
The associated conditional risk is
\begin{equation}
  C_{\phi,C_1,C_{-1}}(\eta,f) = \eta \phi_1(f) 
  + (1-\eta) \phi_{-1}(f)
 \label{eq:CondiRisk}  
\end{equation}
and is minimized by the predictor 
\begin{equation}
  \label{eq:f*C}
  f^*_{\phi,C_1,C_{-1}}(\eta) = \arg\min_{f}   C_{\phi,C_1,C_{-1}}(\eta,f).
\end{equation}
This leads to the minimum conditional risk
\begin{eqnarray}
  C^*_{\phi,C_1,C_{-1}}(\eta) &=& \!\!\!\! \eta \phi_1(f^*_{\phi,C_1,C_{-1}}(\eta)) \nonumber \\
  &+& \!\!\!\! (1-\eta) \phi_{-1}(-f^*_{\phi,C_1,C_{-1}}(\eta)).
  \label{eq:CSminRisk}
\end{eqnarray}
% The Bayes decision rule rule with priors or costs $C_1$ and $C_2$ is  
% choose class $y=1$ if $\frac{\eta C_1}{(1-\eta)C_2}>1$ 
% which can be written as 
%choose class $y=1$ if $(C_1+C_2)\eta-C_2>0$ (or 
%choose class $y=1$ if $\eta>\frac{C_2}{C_1+C_2}$
%or equivalently choose class $y=1$ if $f(\eta)>0$ where $f(\eta)=(C_1+C_2)\eta-C_2$ or $f(\eta)=\log\frac{\eta C_1}{(1-\eta)C_2}$. We see that $f(\eta)=(C_1+C_2)\eta-C_2$ and $f(\eta)=\log\frac{\eta C_1}{(1-\eta)C_2}$ are the cost sensitive equivalent of inverse link functions $f(\eta)=2\eta-1$ and $f(\eta)=\log\frac{\eta}{(1-\eta)}$, simply reducing to the later when $C_1=C_2=1$. 

%\subsection{Cost-sensitive learning algorithms}

Similar to the cost insensitive case, our choice of $\phi_i(\cdot)$ in~(\ref{eq:CSloss}) cannot be arbitrary and we require certain properties for the loss function.
These desirable properties are addressed by extending the approach of~\Citet{savage-loss}.   

\begin{Thm}
\label{Thm:TheorySteps}
Let $g(\eta)$ be any invertible function, 
$J(\eta)$ any convex function, and $\phi_i(\cdot)$
determined by the following steps:
\begin{enumerate}
\item use (\ref{eq:I1Jprime}) and (\ref{eq:I2Jprime}) 
  %with $J(\eta) = -C^*_{\phi,C_1,C_{-1}}(\eta)$ 
  to obtain the $I_{1}(\eta)$ and $I_{-1}(\eta)$,
  and let $C_{\phi,C_1,C_{-1}}(\eta, f)$ be defined by (\ref{eq:CondiRisk}).
\item %set $\phi_1(f^*_{\phi,C_1,C_{-1}}(\eta))=-I_{1}(\eta)$ and
  %$\phi_{-1}(-f^*_{\phi,C_1,C_{-1}}(\eta))=-I_{-1}(\eta) $.
  set $\phi_1(g(\eta))=-I_{1}(\eta)$ and
  $\phi_{-1}(-g(\eta))=-I_{-1}(\eta) $.
\end{enumerate}
Then $g(\eta) = f^*_{\phi,C_1,C_{-1}}(\eta)$ if and only if
  %C^*_{\phi,C_1,C_{-1}}(\eta) 
$J(\eta) = -C^*_{\phi,C_1,C_{-1}}(\eta)$.
\end{Thm}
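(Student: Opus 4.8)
The plan is to collapse the statement onto the Savage bound (\ref{eq:Savagebound}) by pushing the minimization over predictors $f$ through the invertible link $g$. First I would make the loss components produced in Step~2 explicit: since $g$ is invertible, writing $v=g(\hat\eta)$ gives $\phi_1(v) = -I_1(g^{-1}(v))$, and writing $v=-g(\hat\eta)$ gives $\phi_{-1}(v) = -I_{-1}(g^{-1}(-v))$, where $I_1,I_{-1}$ are obtained in Step~1 from the convex $J$ via (\ref{eq:I1Jprime})--(\ref{eq:I2Jprime}).

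Next I would substitute these into the conditional risk, reading it consistently with the loss (\ref{eq:CSloss}) as $C_{\phi,C_1,C_{-1}}(\eta,f)=\eta\phi_1(f)+(1-\eta)\phi_{-1}(-f)$. Reparametrizing an arbitrary predictor value as $f=g(\hat\eta)$ --- a bijection by invertibility of $g$ --- and applying Step~2 yields the key identity
\begin{multline*}
 C_{\phi,C_1,C_{-1}}\big(\eta, g(\hat\eta)\big) = \eta\,\phi_1\big(g(\hat\eta)\big) + (1-\eta)\,\phi_{-1}\big(-g(\hat\eta)\big) \\ = -\big[\eta\,I_1(\hat\eta) + (1-\eta)\,I_{-1}(\hat\eta)\big] = -I(\eta,\hat\eta),
\end{multline*}
i.e.\ the conditional risk at $g(\hat\eta)$ is exactly the negated Savage expected reward for estimating $\eta$ by $\hat\eta$. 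This identity is the crux of the argument.

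The equivalence then drops out of Theorem~\ref{thm:savage}: since $J$ is convex and $I_1,I_{-1}$ come from it via (\ref{eq:I1Jprime})--(\ref{eq:I2Jprime}), the bound (\ref{eq:Savagebound}) holds, i.e.\ $I(\eta,\hat\eta)\le J(\eta)$ with equality iff $\hat\eta=\eta$. Hence $C_{\phi,C_1,C_{-1}}(\eta,f)=-I\big(\eta,g^{-1}(f)\big)\ge -J(\eta)$, with equality iff $f=g(\eta)$. For the ``only if'' direction, if $g(\eta)=f^*_{\phi,C_1,C_{-1}}(\eta)$ then evaluating the identity at $\hat\eta=\eta$ gives $C^*_{\phi,C_1,C_{-1}}(\eta)=-I(\eta,\eta)=-J(\eta)$. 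For the ``if'' direction, if $J(\eta)=-C^*_{\phi,C_1,C_{-1}}(\eta)$ then the minimizer $\hat\eta$ of $-I(\eta,\cdot)$ must attain $I(\eta,\hat\eta)=J(\eta)$, which by the equality clause of (\ref{eq:Savagebound}) forces $\hat\eta=\eta$, so $f^*_{\phi,C_1,C_{-1}}(\eta)=g(\eta)$.

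The step that needs care --- and the main obstacle --- is the reparametrization: one must check that minimizing $C_{\phi,C_1,C_{-1}}(\eta,\cdot)$ over the range of $g$ is the minimization that defines $f^*_{\phi,C_1,C_{-1}}$ (Step~2 only pins down the $\phi_i$ there), and that the equality case of the Savage bound is precisely $\hat\eta=\eta$, which is exactly where strict convexity of $J$ would be needed; if $J$ is merely convex one should read $f^*_{\phi,C_1,C_{-1}}$ as the set of minimizers and phrase both sides of the equivalence accordingly. Everything else is the routine substitution displayed above, together with the definitions (\ref{eq:f*C})--(\ref{eq:CSminRisk}) and the appeal to Theorem~\ref{thm:savage}.
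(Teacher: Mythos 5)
Your proposal is correct and follows essentially the same route as the paper's own proof: both substitute Step~2 into the conditional risk along the link $g$ to obtain $C_{\phi,C_1,C_{-1}}\big(\eta,g(\hat\eta)\big)=-\big[\eta I_1(\hat\eta)+(1-\eta)I_{-1}(\hat\eta)\big]$ and then invoke Theorem~\ref{thm:savage} to identify the maximizer $\hat\eta=\eta$ and the value $J(\eta)$, from which the equivalence follows via (\ref{eq:f*C})--(\ref{eq:CSminRisk}). The only difference is that you make explicit what the paper leaves implicit (the two directions of the equivalence, the equality case of the Savage bound, and the restriction of the minimization to the range of $g$), which is a faithful elaboration rather than a different argument.
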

\begin{proof}
From 1. and Theorem~\ref{thm:savage}, it follows that
\begin{displaymath}
  \eta I_1({\hat \eta}) + (1-\eta) I_1({\hat \eta}) 
\end{displaymath}
has maximum value $J(\eta)$, when ${\hat \eta} = \eta$. 
From 2. the same holds for
\begin{displaymath}
  -\eta \phi_1(g({\hat \eta})) - (1-\eta) 
  \phi_{-1}(-g({\hat \eta}))
\end{displaymath}
and
\begin{displaymath}
  J(\eta) = 
  -\eta \phi_1(g(\eta)) - (1-\eta) \phi_{-1}(-g(\eta)).
\end{displaymath}
It follows from~(\ref{eq:CondiRisk})-(\ref{eq:CSminRisk}) that, 
$g(\eta) = f^*_{\phi,C_1,C_{-1}}(\eta)$ if and only if 
$J(\eta) = -C^*_{\phi,C_1,C_{-1}}(\eta)$.

\end{proof}

The theorem provides a generative method for designing the loss functions $\phi_i(\cdot)$ starting from any pair of invertible function $g(\eta)$ and convex function
$J(\eta)$. The resulting loss function will satisfy (\ref{eq:CondiRisk})-(\ref{eq:CSminRisk}), when $g(\eta) = 
f^*_{\phi,C_1,C_{-1}}(\eta)$ and $J(\eta) = -C^*_{\phi,C_1,C_{-1}}(\eta)$. 

What remains to be answered is how to choose $f^*_{\phi,C_1,C_{-1}}(\eta)$, and $C^*_{\phi,C_1,C_{-1}}(\eta)$ so as to ensure cost sensitive Bayes consistency. The following theorem provides a sufficient condition on $f^*_{\phi,C_1,C_{-1}}(\eta)$ for the Bayes optimality of the loss function.
 
\begin{Thm}
\label{Thm:TheoryLink}
Any invertible predictor $f(\eta)$ with symmetry
\begin{equation}
f^{-1}(-v) = \frac{2C_{-1}}{C_1+C_{-1}} - f^{-1}(v)
\label{eq:linksymCS}
\end{equation}
satisfies the necessary and sufficient conditions for cost-sensitive 
optimality of (\ref{eq:Bayesnec}) with $\gamma=\frac{C_{-1}}{C_1 + C_{-1}}$.
\end{Thm}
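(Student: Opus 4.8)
The plan is to verify directly that the induced predictor $f^*(\mathbf{x}) = f(\eta(\mathbf{x}))$ satisfies each of the three clauses of (\ref{eq:Bayesnec}) with $\gamma = \frac{C_{-1}}{C_1+C_{-1}}$, since (\ref{eq:Bayesnec}) has already been identified as the necessary and sufficient condition for cost-sensitive Bayes optimality of the zero-one risk. So the theorem reduces to a short check once the symmetry (\ref{eq:linksymCS}) is exploited.

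First I would locate the zero of $f$. Evaluating (\ref{eq:linksymCS}) at $v=0$ gives $f^{-1}(0) = \frac{2C_{-1}}{C_1+C_{-1}} - f^{-1}(0)$, hence $f^{-1}(0) = \frac{C_{-1}}{C_1+C_{-1}} = \gamma$, i.e. $f(\gamma)=0$. Because $f$ is invertible, $0$ has $\gamma$ as its unique preimage, so $f(\eta)=0$ if and only if $\eta=\gamma$; this is exactly the middle clause of (\ref{eq:Bayesnec}). Then I would handle the two sign clauses: an invertible (continuous) function on $[0,1]$ is strictly monotone, and as with every link function used in the paper (e.g. $f^*(\eta)=(C_1+C_{-1})\eta - C_{-1}$) the relevant case is that $f$ is strictly increasing. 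Consequently $\eta>\gamma$ forces $f(\eta)>f(\gamma)=0$ and $\eta<\gamma$ forces $f(\eta)<f(\gamma)=0$, which are the remaining two clauses. An equivalent route, if one prefers to argue from the symmetry itself, is to note that for $v>0$ the preimages $f^{-1}(v)$ and $f^{-1}(-v)$ are distinct and average to $\gamma$, hence straddle $\gamma$, with monotonicity deciding which lies on which side.

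The only genuine subtlety is the direction of monotonicity: the relation (\ref{eq:linksymCS}) is, by itself, invariant under replacing $f$ by a decreasing rearrangement, so one does need the standing convention — implicit in calling $f$ a predictor, and explicit in all the example predictors — that $f$ is increasing in the posterior $\eta$. Once that is granted the verification above is immediate, and I would state the conclusion that any such $f$ meets (\ref{eq:Bayesnec}) with $\gamma=\frac{C_{-1}}{C_1+C_{-1}}$, which is what the theorem asserts.
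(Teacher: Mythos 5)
Your proposal is correct and follows essentially the same route as the paper's own proof: evaluate the symmetry (\ref{eq:linksymCS}) at $v=0$ to get $f^{-1}(0)=\frac{C_{-1}}{C_1+C_{-1}}=\gamma$, i.e.\ $f(\gamma)=0$, and then use the (explicitly assumed) monotone increase of $f$ to obtain the sign clauses of (\ref{eq:Bayesnec}). Your explicit remark that the symmetry alone does not fix the direction of monotonicity corresponds exactly to the paper's opening assumption that $f$ is monotonically increasing, so nothing substantive differs.
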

\begin{proof}
Assume that $f(\eta)=v$ is monotonically increasing. Note that $f^{-1}(0)=\frac{C_{-1}}{C_1+C_{-1}}$ which along with
$\eta=f^{-1}(v)$ leads to $f(\frac{C_{-1}}{C_1+C_{-1}})=0$. If $\eta>\frac{C_{-1}}{C_1+C_{-1}}$ then from (\ref{eq:linksymCS})
we have $f^{-1}(-v)<\frac{C_{-1}}{C_1+C_{-1}}$, applying (\ref{eq:linksymCS}) again it follows that  $f(\eta)>\frac{C_{-1}}{C_1+C_{-1}}$. Similarly, if $\eta<\frac{C_{-1}}{C_1+C_{-1}}$ then $f(\eta)<\frac{C_{-1}}{C_1+C_{-1}}$.
\end{proof}

In other words,  any predictor $f^*_{\phi,C_1,C_{-1}}(\eta)$ that satisfies 
(\ref{eq:linksymCS}) will be guaranteed to have a conditional risk that
is minimized by the cost-sensitive Bayes decision rule. 

What remains to be discussed is how to specify $C^*_{\phi,C_1,C_{-1}}(\eta)$ which will determine the risk of the optimal classifier. The goal is to approximate  the minimum conditional cost-sensitive zero-one risk (minimum cost sensitive Bayes risk) given in (\ref{eq:01minRisk}) as best as possible so as to achieve the minimum cost sensitive Bayes error. This is formally presented in the following theorem

\begin{Thm}
\label{Thm:ChooseCondRisk}
The minimum risk of any cost sensitive loss in the form of (\ref{eq:CSloss}) and derived from Theorem \ref{Thm:TheorySteps} can be made to be arbitrarily close, in the expectation, to the minimum cost sensitive Bayes error by choosing the minimum conditional risk of the loss to be arbitrarily close to the minimum conditional risk of the cost sensitive zero-one loss function.
\end{Thm}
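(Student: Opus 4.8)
The plan is to reduce the statement to an elementary bound on expectations, invoking Theorem~\ref{Thm:MinBayesErr} to identify the minimum cost-sensitive zero-one risk with the cost-sensitive Bayes error, and Theorem~\ref{Thm:TheorySteps} to guarantee that losses with the prescribed minimum conditional risk actually exist. First I would fix notation: write $R^*_{\phi,C_1,C_{-1}} = E_X[C^*_{\phi,C_1,C_{-1}}(\eta({\bf x}))]$ for the minimum risk of the cost-sensitive margin loss and $R^*_{C_1,C_{-1}} = E_X[C^*_{C_1,C_{-1}}(\eta({\bf x}))]$ for the minimum cost-sensitive zero-one risk. By Theorem~\ref{Thm:MinBayesErr}, $R^*_{C_1,C_{-1}} = \epsilon_{C_1,C_{-1}}$, the minimum cost-sensitive Bayes error, so it suffices to show $R^*_{\phi,C_1,C_{-1}}$ can be driven arbitrarily close to $R^*_{C_1,C_{-1}}$.

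Next, suppose the loss has been designed (the hypothesis of the theorem) so that $\sup_{\eta\in[0,1]} \bigl| C^*_{\phi,C_1,C_{-1}}(\eta) - C^*_{C_1,C_{-1}}(\eta) \bigr| \le \delta$ for a given $\delta>0$. Since $\int P({\bf x})\,d{\bf x} = 1$, monotonicity of the integral gives
\[
\bigl| R^*_{\phi,C_1,C_{-1}} - R^*_{C_1,C_{-1}} \bigr|
\le \int P({\bf x})\, \bigl| C^*_{\phi,C_1,C_{-1}}(\eta({\bf x})) - C^*_{C_1,C_{-1}}(\eta({\bf x})) \bigr|\, d{\bf x}
\le \delta .
\]
Combined with Theorem~\ref{Thm:MinBayesErr} this yields $\bigl| R^*_{\phi,C_1,C_{-1}} - \epsilon_{C_1,C_{-1}} \bigr| \le \delta$, which is exactly the assertion, provided such a loss exists for every $\delta>0$.

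That existence claim is where the real content lies, and where I expect the main obstacle. One must exhibit, for each $\delta>0$, an admissible cost-sensitive loss of the form (\ref{eq:CSloss}) whose minimum conditional risk is uniformly within $\delta$ of $C^*_{C_1,C_{-1}}(\eta)$. The route is Theorem~\ref{Thm:TheorySteps}: by Theorem~\ref{Thm:RiskSymmetryProps} the map $\eta \mapsto C^*_{C_1,C_{-1}}(\eta)$ is concave (indeed piecewise linear), so $-C^*_{C_1,C_{-1}}$ is convex and can be uniformly $\delta$-approximated by a smooth convex $J(\eta)$; pairing this $J$ with any invertible predictor $g = f$ obeying the symmetry (\ref{eq:linksymCS}) --- which by Theorem~\ref{Thm:TheoryLink} secures cost-sensitive Bayes optimality of the predictor via (\ref{eq:Bayesnec}) --- Theorem~\ref{Thm:TheorySteps} produces $\phi_1,\phi_{-1}$ with $C^*_{\phi,C_1,C_{-1}}(\eta) = -J(\eta)$ and $f^*_{\phi,C_1,C_{-1}} = g$. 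The delicate point is to verify that this construction can be carried out while keeping $\phi_1,\phi_{-1}$ legitimate loss functions (nonnegative, and convex so that the induced optimization remains of SVM type), i.e. that the approximating convex $J$ and a symmetric invertible $f$ can be chosen compatibly; once that is in place, the expectation estimate of the previous paragraph completes the proof.
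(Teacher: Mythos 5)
Your argument is essentially the paper's: the paper's proof consists precisely of invoking Theorem~\ref{Thm:MinBayesErr} to identify the minimum cost-sensitive zero-one risk with the cost-sensitive Bayes error and then writing $R^*_{\phi,C_1,C_{-1}}-\epsilon_{C_1,C_{-1}} = E_X[C^*_{\phi,C_1,C_{-1}}-C^*_{C_1,C_{-1}}]$, which is exactly your first two paragraphs (with the uniform $\delta$-bound made explicit). Your third paragraph on constructing, via Theorems~\ref{Thm:TheorySteps} and~\ref{Thm:TheoryLink}, a loss whose minimum conditional risk uniformly approximates $C^*_{C_1,C_{-1}}$ goes beyond the paper, which treats that closeness as the theorem's hypothesis rather than something to be established.
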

\begin{proof}
\begin{eqnarray}
&&R^*_{\phi,C_1,C_{-1}}- \epsilon_{C_1,C_{-1}}= R^*_{\phi,C_1,C_{-1}} - R^*_{C_1,C_{-1}} = \\
&& E_X[C^*_{\phi,C_1,C_{-1}}] - E_X[C^*_{C_1,C_{-1}}] =
E_X[C^*_{\phi,C_1,C_{-1}} - C^*_{C_1,C_{-1}}]
\label{eq:ExpectationMeasure}
\end{eqnarray}
Where we have used Theorem \ref{Thm:MinBayesErr} for the first equality.
\end{proof} 

While Theorem \ref{Thm:ChooseCondRisk} says that the true measure for determining $C^*_{\phi,C_1,C_{-1}}$ is the expectation of (\ref{eq:ExpectationMeasure}), Theorem \ref{Thm:RiskSymmetryProps} suggests a simpler rule of thumb for selecting $C^*_{\phi,C_1,C_{-1}}$. Property 1. assigns the largest risk to the locations on the 
classification boundary and requiring this property for $C^*_{\phi,C_1,C_{-1}}$ would be vital. Also, enforcing Property 2. further guarantees that the optimal risk has the symmetry of the minimum cost-sensitive Bayes risk. 

\begin{definition}
A minimum risk $C^*_{\phi,C_1,C_{-1}}(\eta)$ is of
\begin{enumerate}
\item Type-I if it satisfies property 1. but not 2. of 
  Theorem~\ref{Thm:RiskSymmetryProps}.
\item Type-II if it satisfies both properties 1. and 2.
\end{enumerate} 
\end{definition}
Risks of type-II are generally closer approximations to the cost-sensitive
Bayes risk than those of type I. Although, strictly speaking the true measure is the expectation of (\ref{eq:ExpectationMeasure}).

The combination of Theorems~\ref{Thm:RiskSymmetryProps}-\ref{Thm:ChooseCondRisk} 
leads to
a generic procedure for the design of cost-sensitive
classification algorithms, consisting of the following steps
\begin{enumerate}
\item select a predictor $f^*_{\phi,C_1,C_{-1}}(\eta)$
  that satisfies~(\ref{eq:linksymCS}).
\item select a concave minimum conditional risk using the measure of (\ref{eq:ExpectationMeasure}) or, as a simpler rule of thumb alternative, select a concave minimum conditional risk
  $C^*_{\phi,C_1,C_{-1}}(\eta)$ of type-I or type-II,
  which reduces to $C^*_{\phi}(\eta)$ when $C_1 = C_{-1} = 1$.
\item use (\ref{eq:I1Jprime}) and (\ref{eq:I2Jprime}) with $J(\eta) = 
  -C^*_{\phi,C_1,C_{-1}}(\eta)$ to obtain $I_{1}(\eta)$ and $I_{-1}(\eta)$.
\item find $\phi_i(\cdot)$ so that 
  $I_{1}(\eta) = - \phi_1(f^*_{\phi,C_1,C_{-1}}(\eta))$ and
  $I_{-1}(\eta) = - \phi_{-1}(-f^*_{\phi,C_1,C_{-1}}(\eta))$.
\item derive an algorithm to minimize the conditional risk of
  (\ref{eq:CondiRisk}).
\end{enumerate}
We next illustrate the practical application of this framework
by showing that the cost-sensitive exponential loss 
of~\Citet{CSAdaBoostICML} can be derived from a minimal conditional risk of 
Type-I. 

\subsection{Cost-sensitive exponential loss}

We start by recalling that AdaBoost is based on the 
loss $\phi(yf) = \exp(-yf)$, for which it can be shown that
%. We start with the results of~\Citet{savage-loss},
%where it is shown that
\begin{eqnarray}
  &&\!\!\!\!\!\!\!\!\!\! C^*_\phi(\eta) = \eta\sqrt\frac{1-\eta}{\eta} 
  + (1-\eta) \sqrt\frac{\eta}{1-\eta} \nonumber \\
  &&\!\!\!\!\!\!\!\!\!\! \,\,\,\,\, \text{and} \,\,\,\,\, 
  f^*_\phi = \frac{1}{2}\log\frac{\eta}{1-\eta}.
  \label{eq:C*CSAda}
\end{eqnarray}
A  natural cost-sensitive extension is 
$f^*_{\phi,C_1,C_{-1}}(\eta) = 
\frac{1}{C_1+C_{-1}}\log\frac{\eta C_1}{(1-\eta)C_{-1}}$,
%\begin{equation}
%    f^*_{\phi,C_1,C_{-1}}(\eta) = 
%    \frac{1}{C_1+C_{-1}}\log\frac{\eta C_1}{(1-\eta)C_{-1}},
%\end{equation}
which is easily shown to satisfy~(\ref{eq:linksymCS}).
Noting that $C^*_\phi(\eta) = \eta\exp(-f^*_\phi) +
(1-\eta) \exp(f^*_\phi)$, suggests the cost-sensitive extension
\begin{eqnarray}
  C^*_{\phi,C_1,C_{-1}}(\eta) &=&\!\!\!\!
 \eta \left(\frac{\eta C_1}{(1-\eta)C_{-1}}\right)^{\frac{-C_1}{C_1+C_{-1}}} + \nonumber \\
 &&\!\!\!\!\!\!\!\!\!\!\!\! (1-\eta)\left(\frac{\eta C_1}{(1-\eta)C_{-1})}\right)^{\frac{C_{-1}}{C_1+C_{-1}}}.
\end{eqnarray}
This does not have the symmetry of~(\ref{eq:CSsymmetryORIG}) but 
satisfies property 1. of Theorem~\ref{thm:Cprops}. Hence, it
is a Type-I risk. It is also equivalent to~(\ref{eq:C*CSAda}) 
when $C_1 = C_{-1} = 1$. Finally, steps 1. and 2. of 
Theorem~\ref{Thm:TheorySteps} produce the loss
\begin{equation}
  \phi_{C_1,C_{-1}}(yf) = \left\{
      \begin{array}{ll}
        \exp(-C_1f), & \text{if $y=1$} \\
        \exp(C_{-1}f), & \text{if $y=-1$}
      \end{array}        \right.
%    \label{eq:CSloss}
\end{equation}
proposed in~\Citet{CSAdaBoostICML}. The resulting cost-sensitive boosting 
algorithm currently holds the best performance in the literature.

\section{Cost sensitive SVM}
Next we extend the hinge loss used in SVMs using the cost sensitive framework established in the previous section.
The cost sensitive SVM optimization problem is also derived.

\begin{figure*}[t]
  \centering
    \begin{tabular}{cc}
     \includegraphics[trim = 0.2in 2in 0.2in 2in, width=3in]{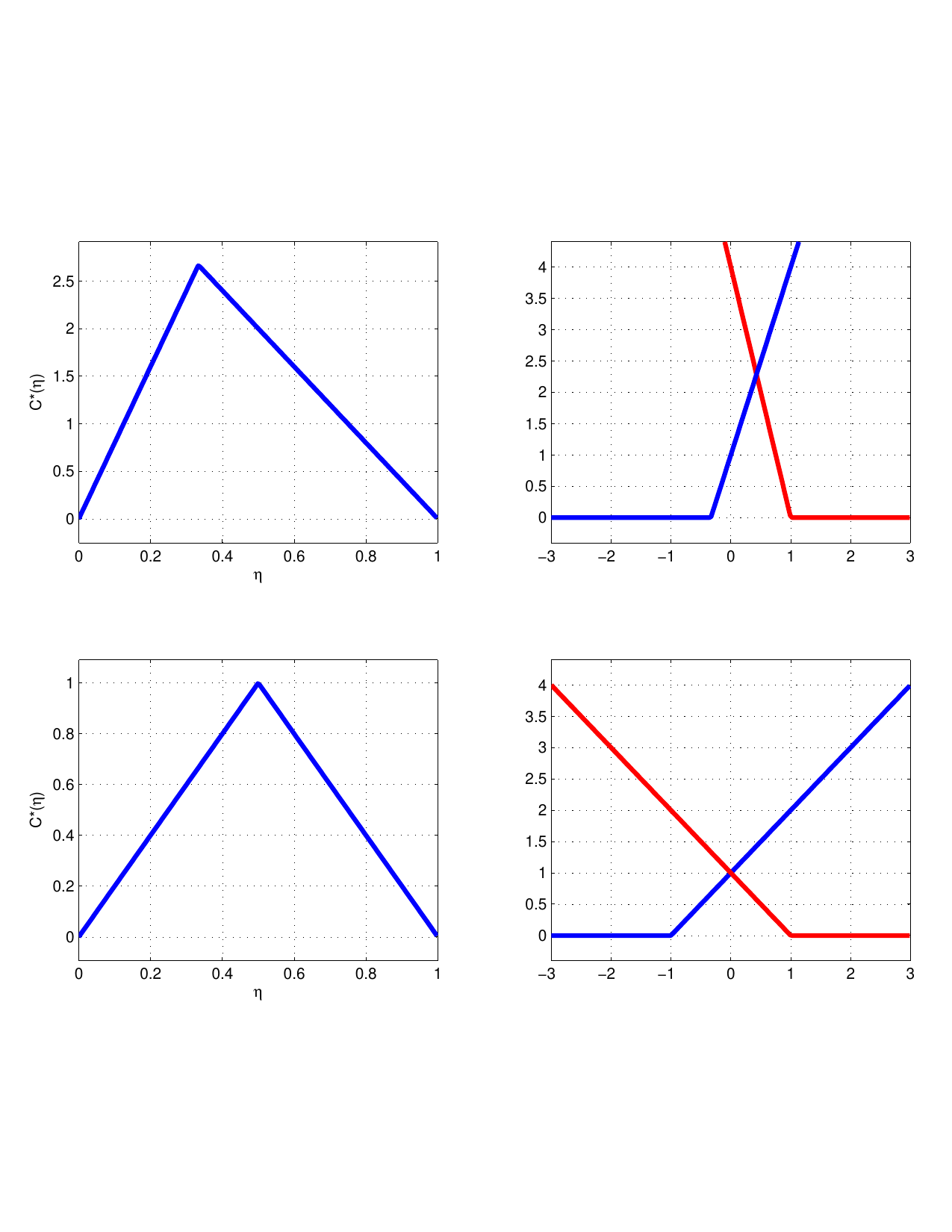}
    \includegraphics[trim = 0.2in 2in 0.2in 2in, width=3in]{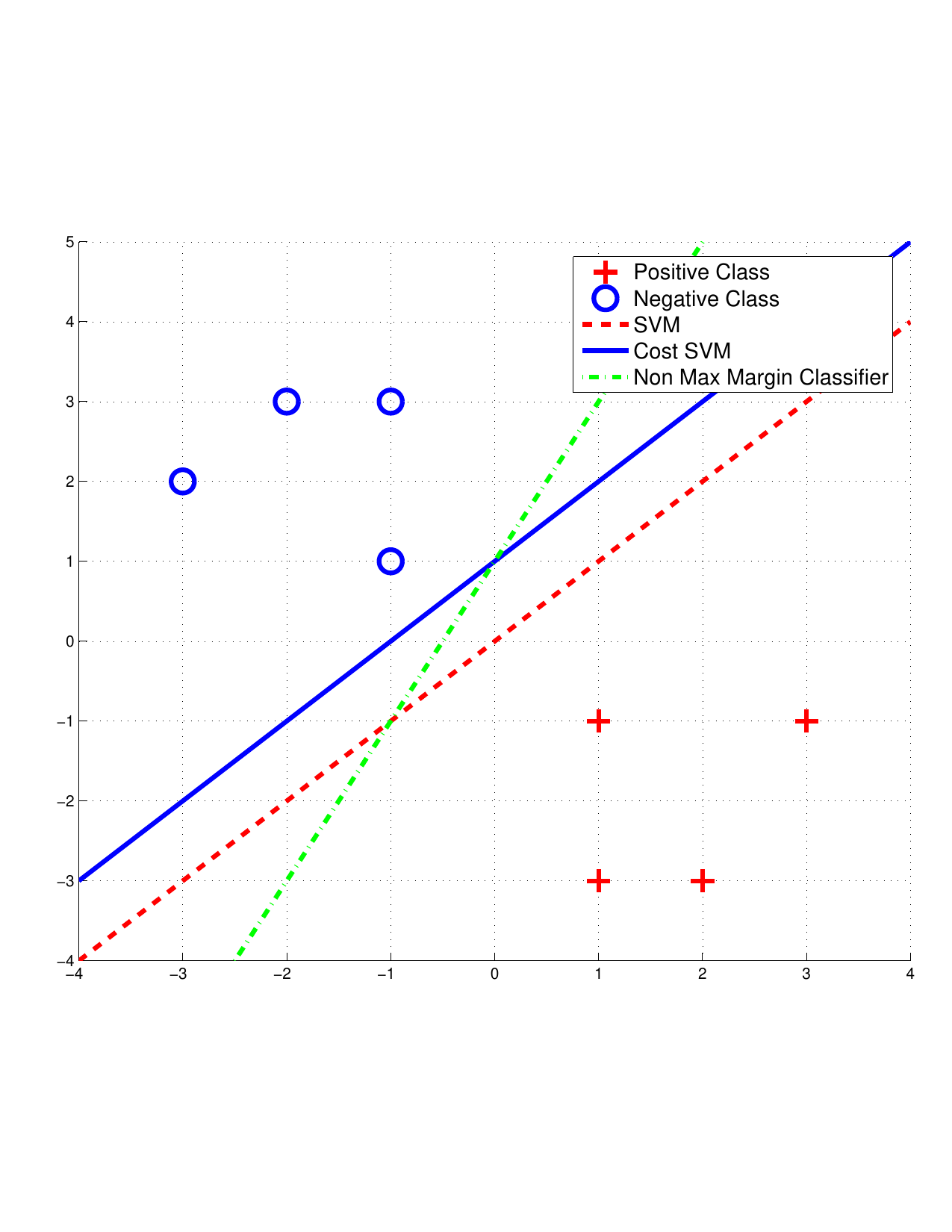} 
 \end{tabular}
\caption{  Left: concave $C^*_{\phi,C_1,C_{-1}}(\eta)$ function and 
corresponding  cost sensitive SVM loss function, top: $C_1=4$, $C_{-1}=2$,
bottom: $C_1=C_{-1}=1$. Right: linearly separable cost sensitive SVM.  }
\label{fig:CSSVMLossJP_FIXFINAL}
\end{figure*}

The SVM minimizes the risk of the
hinge loss $\phi(yf)= \lfloor 1-yf \rfloor_+$, where $\lfloor x \rfloor_+ = 
\max (x, 0)$. The associated risk is minimized by~\Citet{Zhang04} 
\begin{equation}
  \label{eq:fsvm}
  f^*_\phi(\eta) = sign(2\eta - 1)
\end{equation}
resulting in the minimum conditional risk
\begin{eqnarray*}
 &&\!\!\!\!\!\!\!\!\!\! C^*_\phi(\eta) =  1 - |2\eta-1| \\
 &&\!\!\!\!\!\!\!\!\!\! =  \eta  \lfloor 1- sign(2\eta-1)  \rfloor_+  + 
  (1-\eta)  \lfloor 1+ sign(2\eta-1)  \rfloor_+.
\end{eqnarray*}

We follow the generic procedure and replace the optimal cost-insensitive predictor
by its cost-sensitive counterpart
\begin{equation}
  f^*_{\phi,C_1,C_{-1}}(\eta)=sign( (C_1+C_{-1})\eta-C_{-1} ).
\end{equation}
which can be directly shown to satisfy~(\ref{eq:Bayesnec}).
This suggests choosing the cost-sensitive minimum conditional risk 
\begin{eqnarray}
  C^*_{\phi,C_1,C_{-1}}(\eta)= &&
  \eta \lfloor e-d \cdot sign((C_1+C_{-1})\eta-C_{-1})\rfloor_+ +   \\
&& (1-\eta) \lfloor b+a \cdot sign((C_1+C_{-1})\eta-C_{-1})\rfloor_+,   \nonumber
\end{eqnarray}  
%\begin{eqnarray}
%  &&\!\!\!\!\!\!\!\!\!\! C^*_{\phi,C_1,C_{-1}}(\eta)= \\
%  &&\!\!\!\!\!\!\!\!\!\! \eta \lfloor e-d \cdot sign((C_1+C_{-1})\eta-C_{-1})\rfloor_+ + \nonumber \\
%  &&\!\!\!\!\!\!\!\!\!\! (1-\eta) \lfloor b+a \cdot sign((C_1+C_{-1})\eta-C_{-1})\rfloor_+,
%  \nonumber
%\end{eqnarray}  
which can be shown to satisfy~(\ref{eq:CSsymmetryORIG}) if and only if
\begin{eqnarray}
d \geq e  \hspace{0.5in} a \geq b  \hspace{0.25in} \text{and}
\hspace{0.25in} \frac{C_{-1}}{C_1}=\frac{a+b}{d+e}.
\label{eq:constraints}
\end{eqnarray} 
The hinge loss minimum conditional risk  satisfies the conditions of a Type-II loss function and is also a close approximation of the zero-one minimum conditional risk under the criteria of  Theorem -\ref{Thm:ChooseCondRisk}.

After steps 1. and 2. of Theorem~\ref{Thm:TheorySteps},
\begin{equation}
  \phi_{C_1,C_{-1}}(yf) = \left\{
      \begin{array}{ll}
        \lfloor e - d f \rfloor_+, & \text{if $y=1$} \\
        \lfloor b + a f \rfloor_+, & \text{if $y=-1$}.
      \end{array}        \right.
   \label{eq:CSlossSVMmm}
\end{equation}
This loss has four degrees of freedom, which 
control the margin and slope of the hinge components associated with the two
classes: positive examples are classified with margin $\frac{e}{d}$ and 
hinge loss slope $d$, while for negative examples the margin is 
$\frac{b}{a}$ and  slope $a$.

\subsection{Cost-sensitive SVM learning}
\label{sec:CSSVMLearning}
We consider the case where errors in the positive class are weighted more 
heavily, leading to the inequalities $\frac{b}{a} \le \frac{e}{d} $ and 
$d \ge a$. Choosing $e=d=C_1$ normalizes the margin of positive examples
to unity $(\frac{e}{d}=1)$. Selecting $b=1$ then fixes the scale 
of the negative component of the hinge loss, leading to $a=2C_{-1}-1$. 
The resulting cost sensitive SVM loss function is 
\begin{eqnarray}  \label{eq:CSH}
\phi_{C_1,C_{-1}}(yf)= 1_{\{y=1\}} C_1\lfloor1-yf\rfloor_+ + 1_{\{y=-1\}} \lfloor1-(2C_{-1}-1)yf\rfloor_+
\end{eqnarray}  
and the cost sensitive SVM minimal conditional risk is  
\begin{eqnarray}
  \label{eq:CstarSVMmm}
 &&\!\!\!\!\!\!\!\!\!\! C^*_{\phi,C_1,C_{-1}}(\eta)= \\
 &&\!\!\!\!\!\!\!\!\!\! \eta \lfloor C_1-C_1 \cdot sign((C_1+C_{-1})\eta-C_{-1})\rfloor_+ + \nonumber \\
  &&\!\!\!\!\!\!\!\!\!\! (1-\eta) \lfloor 1+(2C_{-1}-1) \cdot 
  sign((C_1+C_{-1})\eta-C_{-1})\rfloor_+ \nonumber
\end{eqnarray}  
with $C_{-1}\geq 1$ and $C_1 \geq 2C_{-1}-1$, so as to 
satisfy~(\ref{eq:constraints}). Figure~\ref{fig:CSSVMLossJP_FIXFINAL} 
presents plots of~(\ref{eq:CstarSVMmm})
and~(\ref{eq:CSH}), for both $C_1=4$, $C_{-1}=2$ and the cost 
insensitive case of $C_1=1$, $C_{-1}=1$ (standard SVM). Note that,
for the cost-sensitive SVM, the positive class has a unit margin,
while the negative class has a smaller margin of $\frac{1}{3}$. Also,
the slope of the positive component of the loss is $4$ while the negative 
component has a smaller slope of $3$. In this way, the loss assigns a 
higher cost to errors in the positive class when the data is not 
separable, while enforcing a larger margin for positive examples when the 
data is separable.
Replacing the standard hinge loss with~(\ref{eq:CSlossSVMmm}) in the standard SVM risk~\Citet{SVMTutorial} 
\begin{eqnarray}
\underset{w,b}{\text{argmin}}\sum_{\{i|y_i = 1\}}\!\!\!\!\!\!\ \lfloor C_1-C_1 (w^Tx_i+b)\rfloor_+  + \!\!\!\!\!\!\ \sum_{\{i|y_i = -1\}}\!\!\!\!\!\!\ \lfloor 1+(2C_{-1}-1) (w^Tx_i+b)\rfloor_+ + \frac{1}{2C} ||w||^2,
\end{eqnarray}
leads to the primal problem 
\begin{equation}\label{eq:csprimal}
\begin{aligned}
&\underset{w,b,\xi_i}{\text{argmin}} \ &&\frac{1}{2} ||w||^2 + C \left[  C_1 \sum_{\{i|y_i=1 \}} \xi_i   + \frac{1}{\kappa} \sum_{\{i|y_i=-1 \}} \xi_i \right]  \\
&\text{subject to} &&(w^Tx_i+b) \geq 1-\xi_i ; \hspace{0.2in} \ \ \  y_i=1 \\
& &&(w^Tx_i+b) \le - \kappa +\xi_i ; \hspace{0.2in} y_i=-1 
\end{aligned}
\end{equation}
with  
\begin{equation}
  \kappa = \frac{1}{2C_{-1}-1}, \hspace{0.5in} 0<\kappa\le 1 \le \frac{1}{\kappa} \le C_1.
  \label{eq:svmpars}
\end{equation}
%where $C_{-1} \geq 1$ and $C_1 \geq 2C_{-1}-1$,
This is a quadratic programming problem similar to that of the 
standard cost-insensitive SVM with soft margin weight parameter $C$.
In this case, cost-sensitivity is controlled by the parameters
$C_1, \frac{1}{\kappa},$ and $\kappa$. The parameter $\kappa$ is responsible 
for cost-sensitivity in the separable case. Under the constraints 
$C_{-1} \geq 1$, $C_1 \geq 2C_{-1} - 1$, ($0<\kappa\le 1 \le \frac{1}{\kappa} \le C_1$), of a type-II risk, it imposes a smaller margin on 
negative examples. On the other hand, $C_1$ and $\frac{1}{\kappa}$ control the 
relative weights of margin violations, assigning more weight to positive 
violations. This allows control of cost-sensitivity when the data
is not separable.

Obviously, this primal problem could be defined through heuristic
arguments. However, it would be difficult to justify precise choices for the parameters of~(\ref{eq:svmpars}). Furthermore, the derivation
above guarantees that the optimal classifier implements the Bayes
decision rule of~(\ref{eq:Bayesnec}) with $\gamma = \frac{C_{-1}}{C_1
+C_{-1}}$, and its risk is a type-II approximation to the cost-sensitive
Bayes risk. No such guarantees would be possible for an heuristic solution.

To obtain some intuition about the cost-sensitive extension, we consider 
the synthetic problem of Figure \ref{fig:CSSVMLossJP_FIXFINAL}, where the two 
classes are linearly separable. The figure shows three separating lines.
The green line is an arbitrary separating line that does not maximize the
margin.  The red line is the standard SVM solution, which has maximum 
margin and is equally distant from the nearest examples of the two classes. 
The blue line is the solution of~(\ref{eq:csprimal}) for $C_1=4$ and
$C_{-1}=2$ (the $C$ parameter is irrelevant  when the data is separable). It is also a maximum margin solution, but trades-off the distance 
to positive and negative examples so as to enforce a larger positive
margin, as specified. Overall, an increase in $C_{-1}$ (decrease in $\kappa$) guarantees a larger 
positive margin. For a given $C_{-1}$, increasing $C_1$ (so that 
$C_1 \ge 2C_{-1}-1$) increases the cost of errors on positive examples, 
enabling control of the miss rate when the classes are not separable. 

We note that for the separable case, a limited level of cost sensitive performance can be achieved using the BP-SVM formulation of (\ref{eq:BPSVM}) along with a small weight parameter $C$ ($C<\frac{1}{2}$), but a small $C$ is undesirable in general as it leads to an under trained model 
%smaller C lead to larger margin but under trained model.} 
with training errors even when the data is separable. The CS-SVM formulation, on the other hand, provides a maximum margin solution regardless of the chosen weight parameter $C$ . The CS-SVM is preferable even in the inseparable case because  increasing the weight parameter $C$, in an attempt to reduce training error, inevitably leads to over training in the BP-SVM formulation. This is not necessarily the case for the CS-SVM formulation which allows a decrease of the margin of the negative samples (through an appropriate choice of $\kappa$) and a relative increase in the margin of the positive samples,  independent of the weight parameter $C$ and does not lead to over training. In other words, unlike the BP-SVM formulation, the CS-SVM does not simply over train on the positive class, it  maximizes the margin on this class. This can also be seen, with added clarity,  in the dual CS-SVM formulation which is discussed in the next section.

\section{Cost-sensitive SVM in the dual}
The dual and kernelized  formulation of the CS-SVM of (\ref{eq:csprimal}) can be derived  as 
\begin{equation}\label{eq:csdual0}
\begin{aligned}
&\underset{\alpha}{\text{argmax}} \hspace{0.1in}  
&& \sum_i \alpha_i \left(\frac{y_i+1}{2} - \frac{\kappa(y_i-1)}{2}  \right) -\frac{1}{2} \sum_i \sum_j \alpha_i \alpha_j y_i y_j K(x_i,x_j) \\
& \text{subject to} && \sum_i \alpha_i y_i=0  \\
& &&      0 \le \alpha_i \le C C_1 ; \hspace{0.3in}  y_i=1 \\         
& &&      0 \le \alpha_i \le \frac{C}{\kappa} ;  \hspace{0.3in} \ \ \; y_i=-1 
\end{aligned}
\end{equation}
which reduces to the standard SVM dual when $C_1 = C_{-1} = 1$.
Unlike the previous BM-SVM and BP-SVM algorithms, the CS-SVM algorithm performs regardless of the separability of the data and the chosen slack penalty $C$.
This can be further studied in detail by writing the dual problem ~(\ref{eq:csdual0}) as

\begin{equation}\label{eq:csdual1}
\begin{aligned}
&\underset{\alpha}{\text{argmax}} \hspace{0.1in} && \sum_i \alpha^+_i + \kappa\sum_i \alpha^-_i   -\frac{1}{2} \sum_i \sum_j \alpha_i \alpha_j y_i y_j K(x_i,x_j)  \\
& \text{subject to} && \sum_i \alpha_i y_i=0  \\
&&&      0 \le \alpha^+_i \le C C_1  \\         
&&&      0 \le \alpha^-_i \le \frac{C}{\kappa} 
\end{aligned}
\end{equation}
with 
\begin{align} 
&0 < \kappa \le 1 \le \frac{1}{\kappa} \le C_{1} \label{eq:lossconst} \\
&{\alpha_i}^{+}=\{{\alpha}_i | y_i=1\},\ \ {\alpha_i}^{-}=\{{\alpha}_i | y_i=-1\}. \nonumber
\end{align}
Moreover, since  $\alpha_i \ge 0$ and $\kappa=1-(1-\kappa)$ we can rewrite  (\ref{eq:csdual1}) with an \lone norm term as 
\begin{equation}\label{eq:csdual2}
\begin{aligned}
&\underset{\alpha}{\text{argmax}} \hspace{0.1in} &&   -\frac{1}{2} \alpha^T Y KY\alpha + \textbf{1}^T\alpha   -(1-\kappa) {\|{\alpha}^{-}\|}_1 \\
&\text{subject to} &&  \alpha^T y=0  \\
&&&     0 \preceq \alpha^+ \preceq C C_{1}  \\         
&&&      0 \preceq \alpha^- \preceq \frac{C}{\kappa} . 
\end{aligned}
\end{equation}
where $Y=Diag(y)$ and $\textbf{1}$ is the vector of all ones.

When $C_{1}=1$ and $\kappa=1$, i.e. $C_{-1}=1$, the problem of (\ref{eq:csdual2}) reverts to the standard SVM dual formulation. This implies that (\ref{eq:csdual2}) is totally compatible with standard dual solvers and its implementation  on existing SVM dual solvers is a non-issue.

If we transform problem (\ref{eq:csdual2}) into a minimization problem, the term ${\|{\alpha}^-\|}_1$ acts as an \lone regularization term with positive coefficient $(1-\kappa)$. Another difference with the standard cost insensitive SVM (CI-SVM) and BP-SVM dual problem is that in  \eqref{eq:csdual2}, the upper bounds on $\alpha^+$ and $\alpha^-$ are scaled differently. 
In particular, because $\frac{1}{\kappa} \le C_{1}$, the active upper bound constraints on $\alpha^+_i$ are relaxed, compared to $\alpha^-_i$.
In summary, the CS-SVM dual problem (\ref{eq:csdual2}) has two major differences compared to the CI-SVM dual problem:
\begin{enumerate}
\item  \lone regularization on ${\alpha}^-$.
\item relaxed  inequality constraints on ${\alpha}^+$. 
\end{enumerate}
These modifications have nontrivial consequences which connect regularization theory and sensitivity analysis to cost-sensitive learning. We study the implications of these modifications by first representing the CI-SVM dual problem  as a regularized risk minimization problem which allows us to explain the extra regularization term $-(1-\kappa) {\|{\alpha}^-\|}_1$  for both the case of cost sensitive learning and imbalanced learning problems. Subsequently, we study the affect of relaxing the inequality constraint on ${\alpha}^+$ using sensitivity analysis.

\begin{figure*}[t]
  \centering
    \begin{tabular}{cccc}
\includegraphics[trim = 0.2in 1.75in 0.2in 1.75in, width=1.4in]{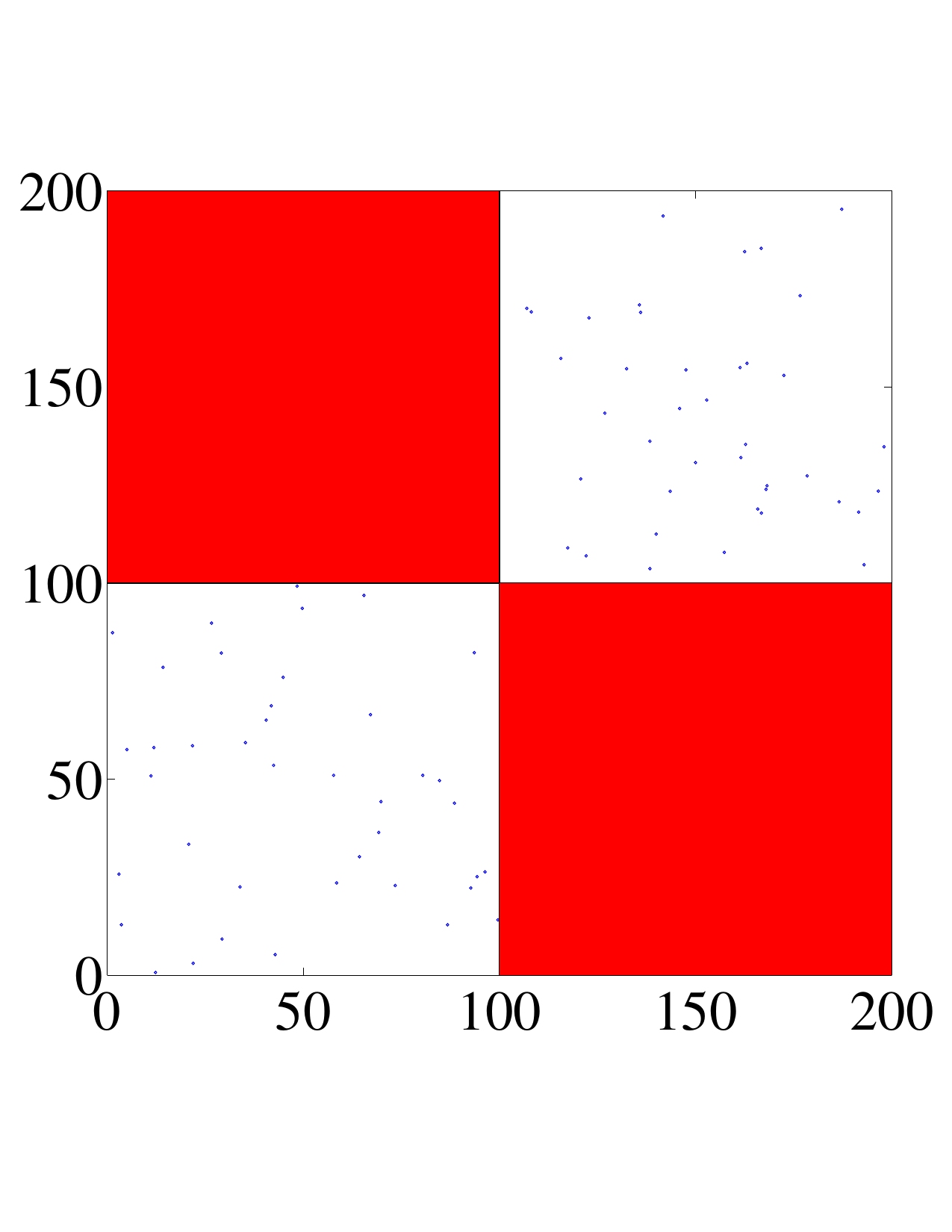} & \includegraphics[trim = 0.2in 1.75in 0.2in 1.75in, width=1.4in]{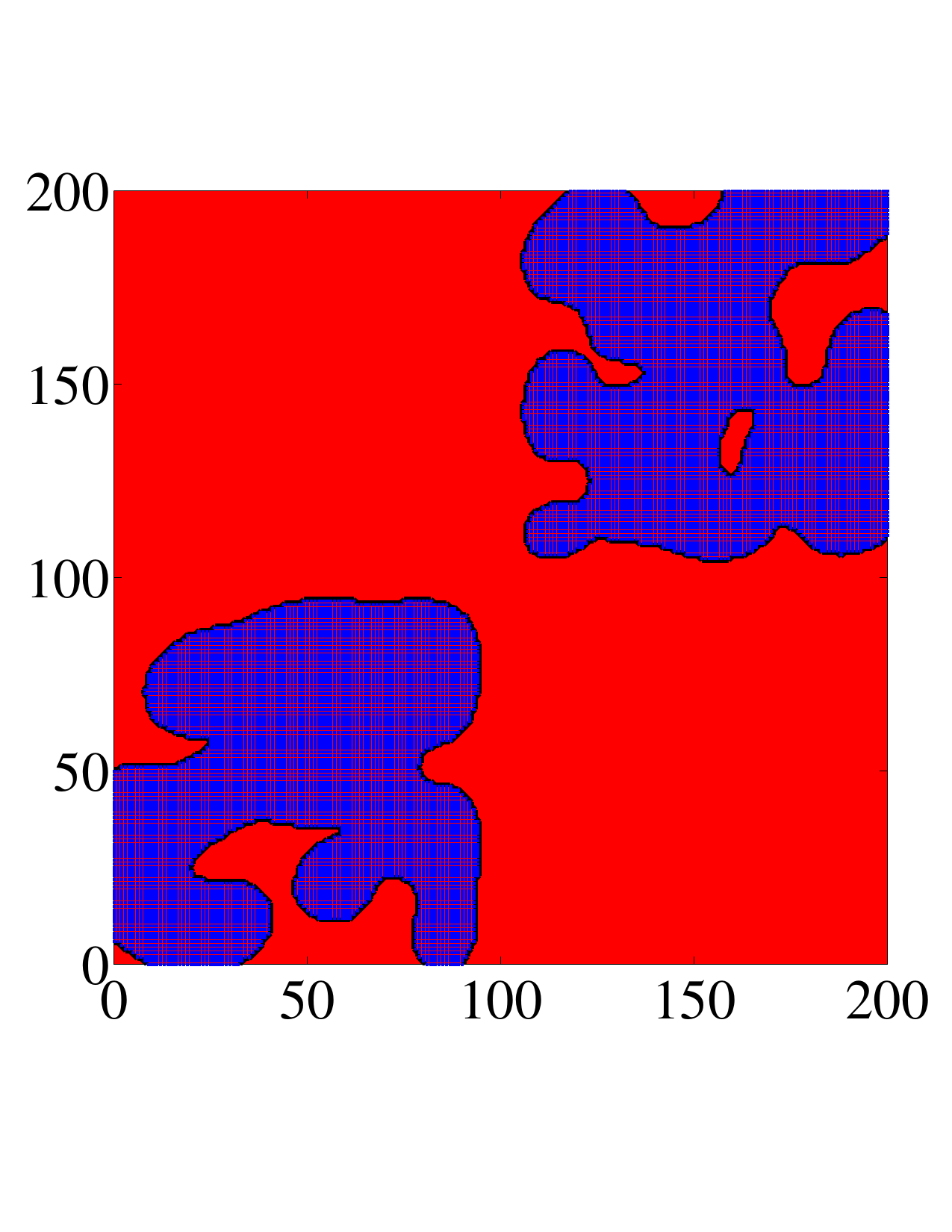} & \includegraphics[trim = 0.2in 1.75in 0.2in 1.75in, width=1.4in]{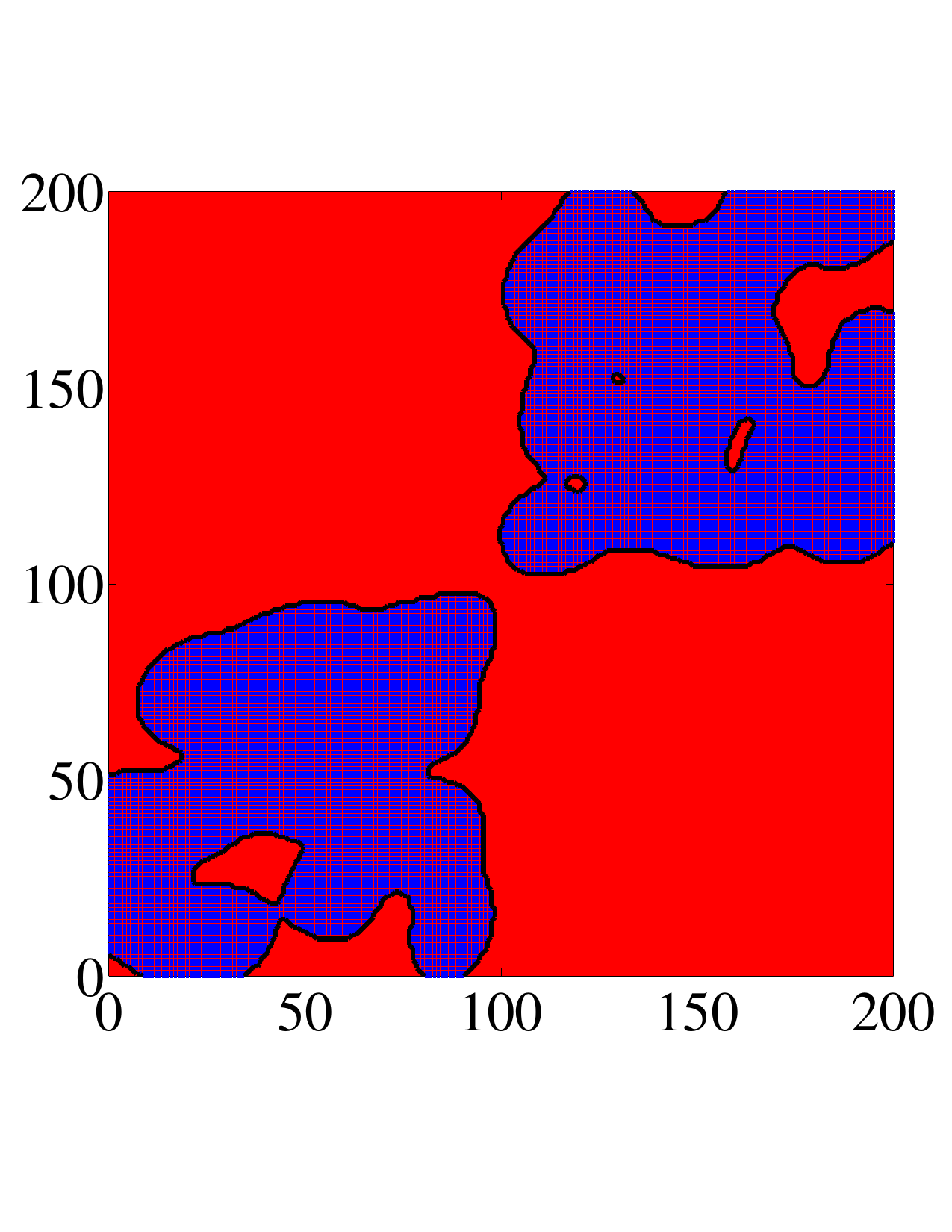} & \includegraphics[trim = 0.2in 1.75in 0.2in 1.75in, width=1.4in]{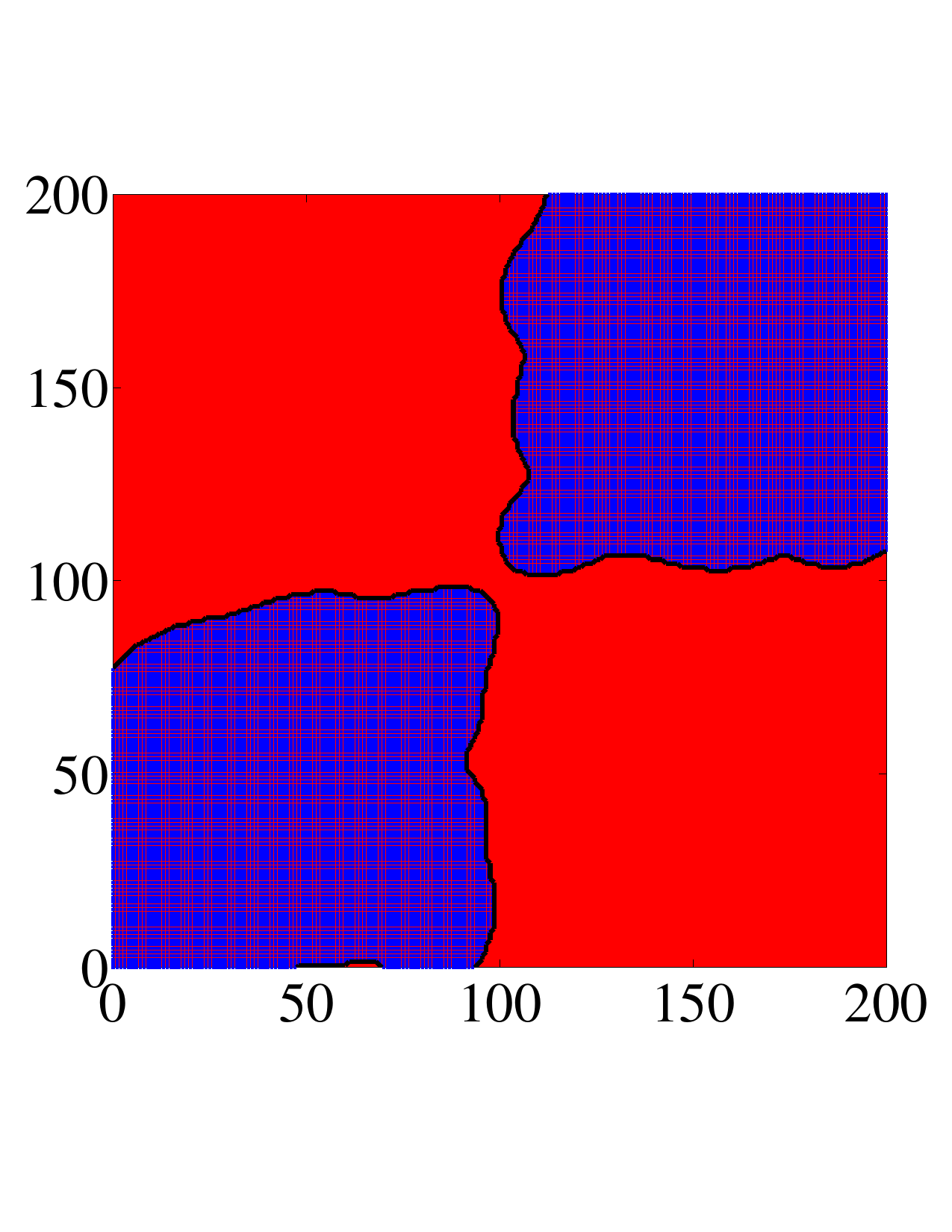} \\
             (a) & (b) &(c) & (d) \\
     \includegraphics[trim = 0.2in 1.75in 0.2in 1.75in, width=1.4in]{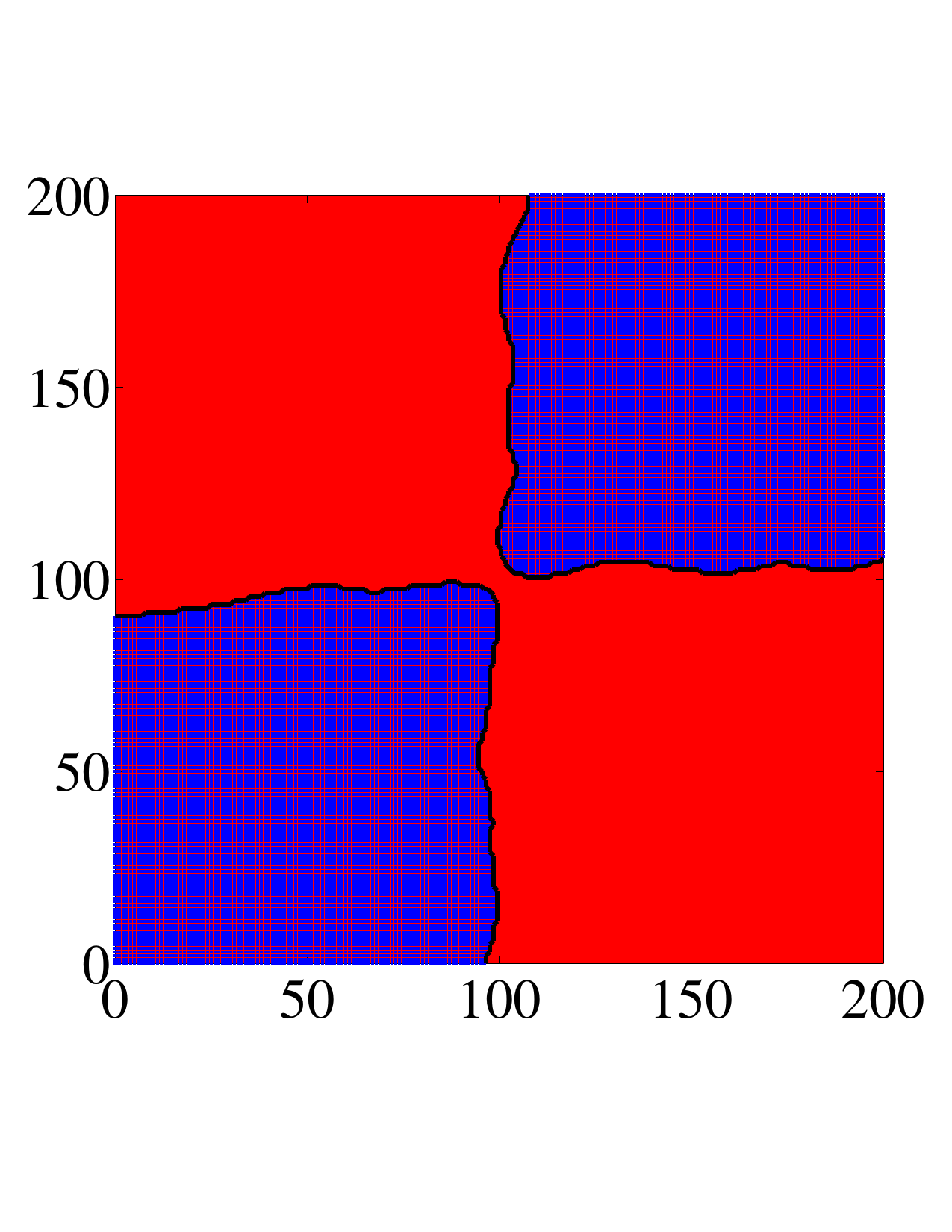} & \includegraphics[trim = 0.2in 1.75in 0.2in 1.75in, width=1.4in]{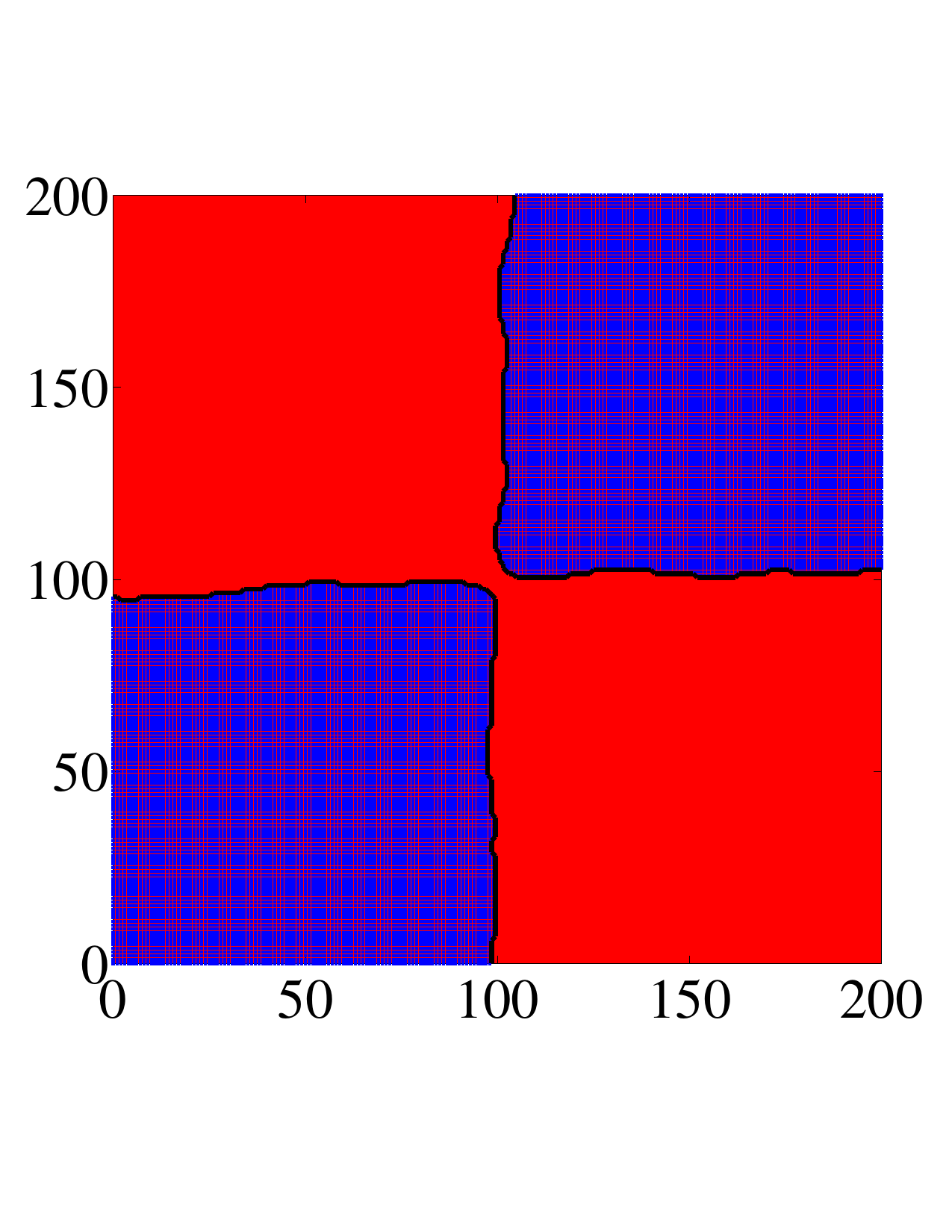} & \includegraphics[trim = 0.2in 1.75in 0.2in 1.75in, width=1.4in]{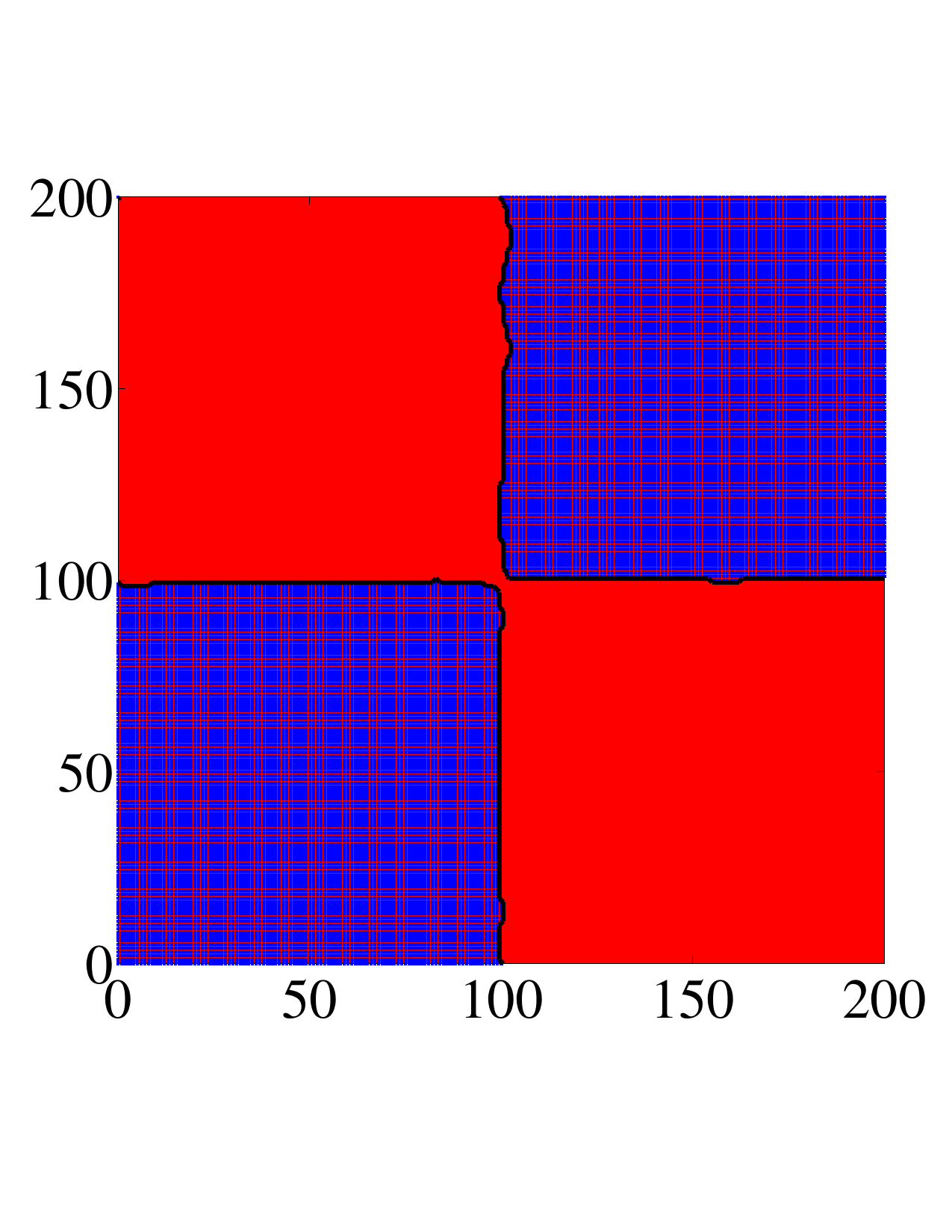} & \includegraphics[trim = 0.2in 1.75in 0.2in 1.75in, width=1.4in]{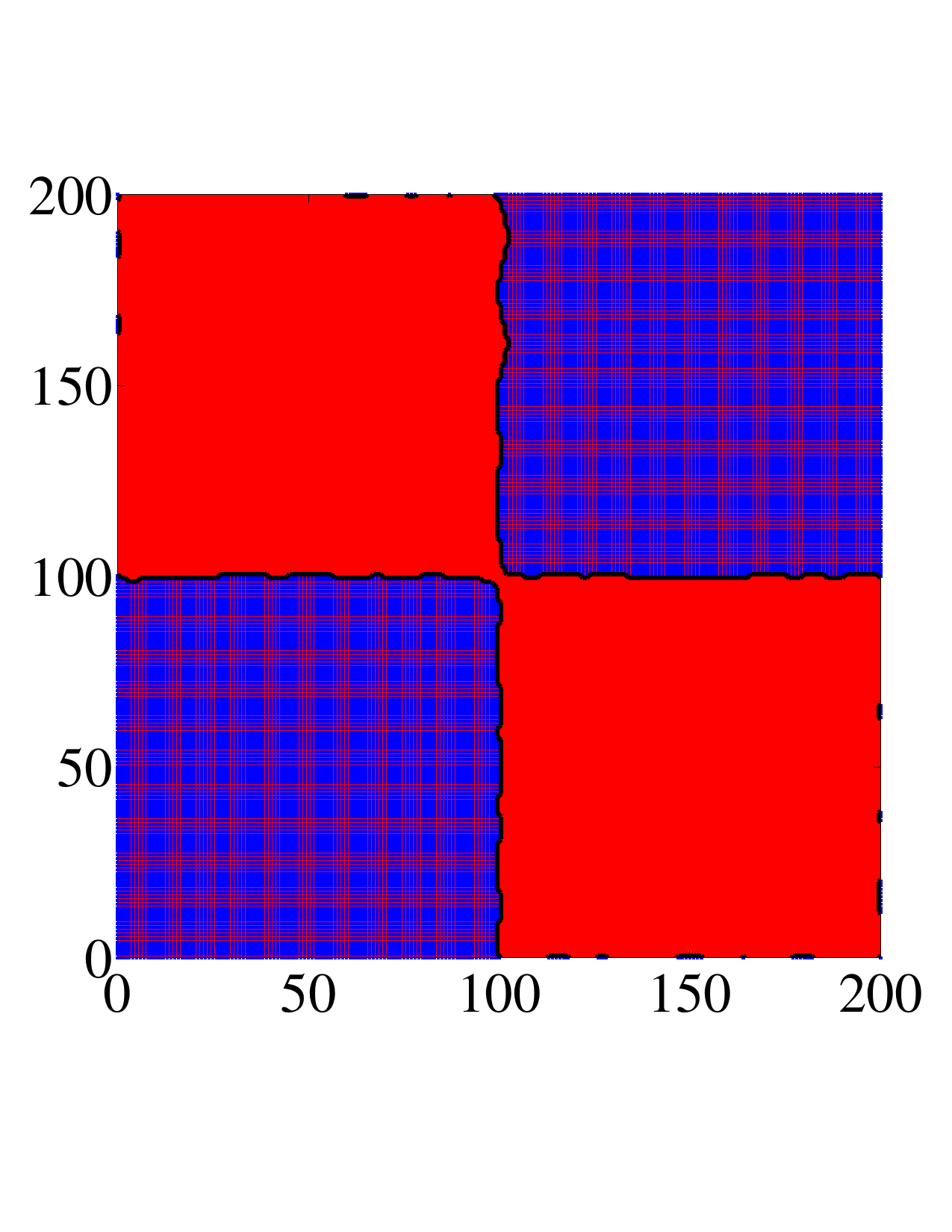} \\
     (e) & (f) &(g) & (h) \\
 \end{tabular}
\caption{ (a) The Checkerboard dataset with imbalance ratio 1:1000, (b) classification result of BM-SVM (c) classification result of BP-SVM with $C_1=100$ and $C_{-1}=1$ (CS-SVM with $\kappa=1$), (d) classification result of CS-SVM with $\kappa=0.5$, (e) classification result of CS-SVM with $\kappa=0.25$, (f) classification result of CS-SVM with $\kappa=0.1$, (g) classification result of CS-SVM with $\kappa=0.01$, (h) classification result of CS-SVM with $\kappa=0.001$.} 
\label{fig:checker}
\end{figure*}

\subsection{Regularization on Lagrange multipliers}

In this subsection we study the effects of \lone regularization on  $\alpha^-$ in the dual problem, while considering imbalanced dataset learning and cost-sensitive learning separately.

\subsubsection{Imbalanced dataset learning}
 In many applications examples from the target (positive) class are outnumbered by the non-target class. Moreover, in  multi-class classification problems where the number of classes are large and a one-versus-all scheme is used, the number of examples in each individual class is usually small compared to the rest of the examples, leading to a highly imbalances problem. These sorts of imbalances occur with different intensity, with ratios between the minority and majority class ranging from  1:10 to 1:$10^6$ ~\Citet{provost01}. 

For the SVM training problem, the  number of support vectors  grows linearly with the number of examples ~\Citet{steinwart04}, and this implies that the number of support vectors for each class grows linearly with the number of examples of that class. Therefore, the same imbalance, if not worse, happens in the number of nonzeros in of the solution. In other words, when the dual problem is solved, most of the support vectors belong to the majority class. The problem becomes more apparent when we take into account the equality constraint of \eqref{eq:csdual2}

\begin{eqnarray}
\sum_i \alpha_i y_i=0,  
\end{eqnarray}
which implies
\begin{eqnarray}
\|\alpha^+\|_1 = {\|\alpha^-\|}_1 
 \end{eqnarray}
Also, results of ~\Citet{steinwart04} implies that for imbalanced datasets
\begin{eqnarray}
 \znorm{\alpha^+}\ll\znorm{\alpha^-}.\label{eq:card}
\end{eqnarray}
This results in an irregular solution, with the $\alpha^+_i$s taking values close to the upper bound $C$  and the $\alpha^-_i$s taking values close to the lower bound zero. ~\Citet{SVM-KBA1} illustrated this problem by conducting an experiment on a 2D Checkerboard dataset with different imbalance ratios as seen in Figure \ref{fig:checker}(a) . They showed that in the case of imbalanced data, the decision boundary is unwillingly shifted toward the minority class. This is because of a lack of enough examples (support vectors) for the minority class that reside close to the correct decision boundary. When enough examples  don't exist at the right place, the margin relies on other examples farther away from the ideal decision boundary, resulting in the decision boundary shifting toward the minority class. They also equivalently illustrated that this is caused by irregular values in the dual variables. This problem persists in the BM-SVM and BP-SVM formulation as a result of their flawed implementation of the asymmetric margin, and can be seen in Figure \ref{fig:checker} which show the classification results for the BM-SVM, BP-SVM and CS-SVM on the Checkerboard dataset.

\begin{figure*}[t]
  \centering
	 \begin{tabular}{cc}
     \includegraphics[width=2.8in]{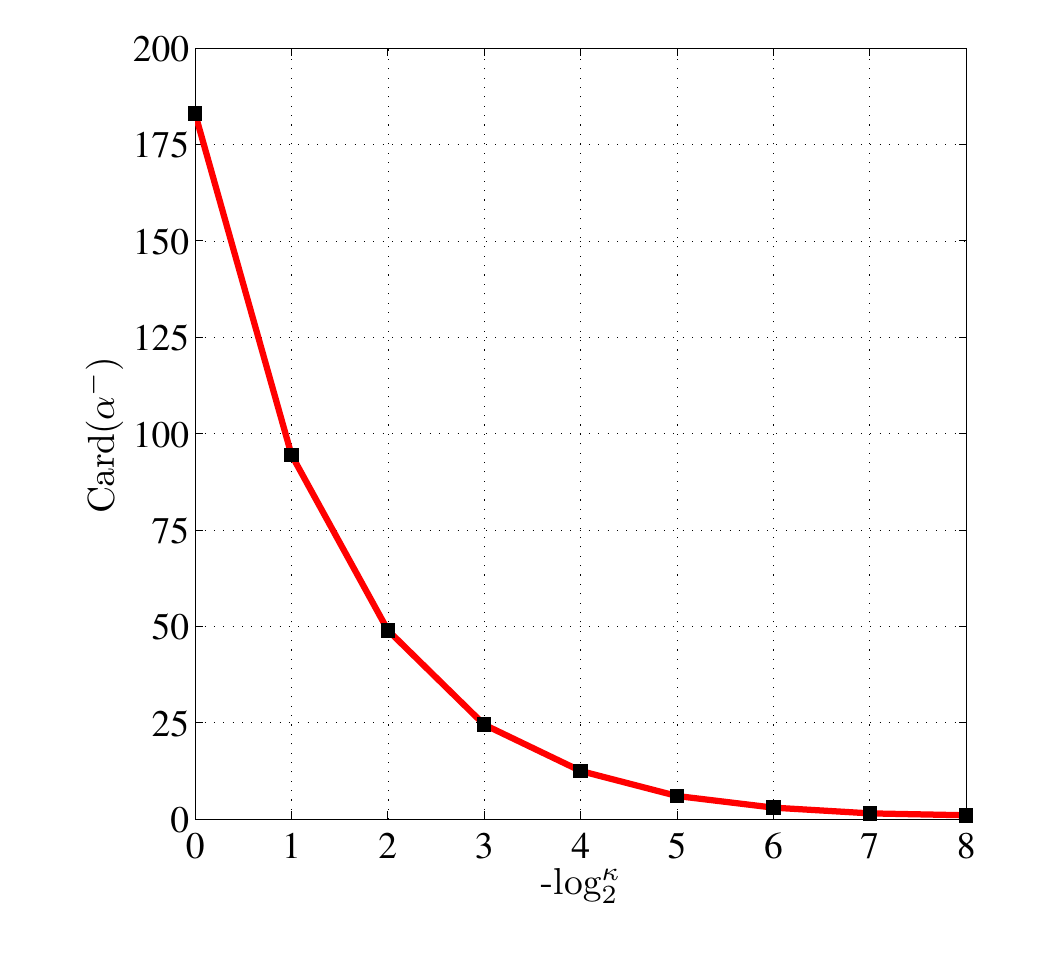} 
    &\includegraphics[width=2.8in]{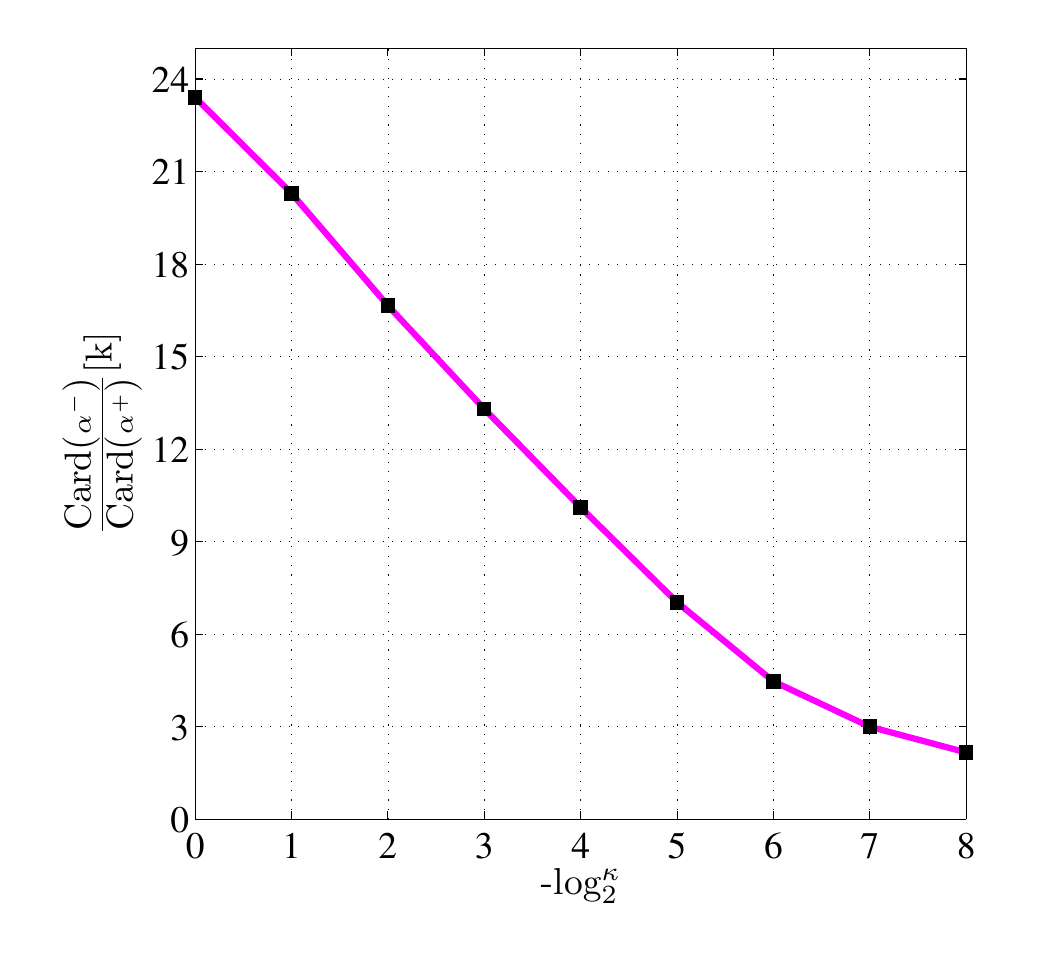} \\
        (a) & (b)  \\
	 \end{tabular}
\caption{ The CS-SVM algorithm for different choices of $\kappa$ is applied to
the covertype UCI dataset which is imbalanced with a ratio of 1:211. Starting at $\kappa=1$, CS-SVM acts as the BP-SVM. (a) shows the reduction in the number of $\alpha^-$ as $\kappa$ decreases and  (b) shows the reduction in the imbalance ratio as $\kappa$ decreases. As $\kappa$ decreases the number of negative support vectors is reduced so that by $\kappa=2^{-256}$ the imbalance ratio between support vectors approaches  $1$. }
\label{fig:reg}
\end{figure*}

Given that for imbalanced dataset problems the vector $\alpha^-$ has small non sparse elements while the vector $\alpha^+$ is highly sparse \eqref{eq:card}, the natural remedy  is to regularize the non-sparse part of the solution, $\alpha^-$, with a sparsity inducing \lone regularizer  ~\Citet{book:convex}. This leads to a sparse $\alpha^-$,  at the solution which is now both balanced and regularized.
The CS-SVM problem (\ref{eq:csdual2}) uses the same technique to deal with the problem of imbalanced datasets by choosing appropriate choice of $\kappa$. As $\kappa$ tends to zero the regularization coefficient $(1-\kappa)$ increases resulting in an increased regularization of the $\alpha^-_i$s, which enforces larger margin for minority (positive) class. Figure \ref{fig:checker}(g) shows that for a highly imbalanced checkerboard data, an small $\kappa=0.01$ corrects the decision boundary, close to the optimal one. 
Choosing $\kappa<0.01$  violates the condition \eqref{eq:lossconst} and has a diminishing return, i.e., leads to preferring the majority class as shown in Figure \ref{fig:checker}(h).
 
Also, Figure \ref{fig:reg} illustrates the effect of the CS-SVM regularization on the number of support vectors of each class in the solution. The CS-SVM algorithm with different choices of $\kappa$ is applied to the covertype UCI dataset which is imbalanced with a ratio of 1:211, which as the regularization coefficient $(1-\kappa)$ increases, $\alpha^-$  becomes sparser (Figure \ref{fig:reg}(a)). This leads to an equivalence between the number of non-zero components of $\alpha^-$ and $\alpha^+$ (Figure \ref{fig:reg}(b)).

Therefore, the CS-SVM  in the dual, applies a sparsity inducing \lone regularization on the $\alpha^-$ and when dealing with imbalanced datasets, the CS-SVM implicitly prevents unwanted movement of the discriminant boundary toward the minority class by enforcing margin to be asymmetric.

\subsubsection{Cost-sensitive learning} 
As shown in the previous section, regularization of any class results in a smaller margin for that class. So, in the cost-sensitive learning setting which costs are known, CS-SVM reduces the margin for the class with the lower cost, or equivalently increases the margin for the class with the higher cost.

In general, the extra \lone regularization in the CS-SVM dual problem makes the margin asymmetric, in favor of the minority class or the class with higher cost for imbalanced data learning and cost-sensitive learning, respectively.

\begin{figure*}[t]
\xymatrixrowsep{1in}
\xymatrixcolsep{2in}
\xymatrix{
\!\!\!\!\!\!\!\!
\textbf{Primal} \ \  \text{\small{Max. Margin Sep.}}(w,\Psi, \phi) 
\ar@{<->}[d]_{Dual}
\ar@{<->}[r]^{w^T\Psi_i \leftrightarrow \beta^TK_i}_{\Psi^T\Psi \leftrightarrow K} & \text{\small{Reg. Risk Min.}}(\beta,K,\phi) \ar@{<->}[d]^{Dual}\\
\textbf{Dual}  \ \  \text{\small{Dual of: Max. Margin Sep.}}(\alpha,K, \phi^*) 
\ar@{<->}[r]_{K  \leftrightarrow K^{-1}}^{\alpha \leftrightarrow YK^{-1}z}& \text{\small{Reg. Risk Min.}}(z,K^{-1},\phi^*)}
\caption{ The commutative diagram for existing SVM formulations essentially depends on associated parameter spaces $w,\alpha,\beta,z$ and feature spaces $\Psi,K,K^{-1}$. The matrix $\Psi^T$ is the Cholesky factor of $K$, i.e. $K=\Psi^T\Psi$, with its $i^{th}$ row corresponding to the feature space representation of the example $x_i$, i.e., $\Psi_i=\varphi(x_i)$. }
\label{fig:comm}
\end{figure*}
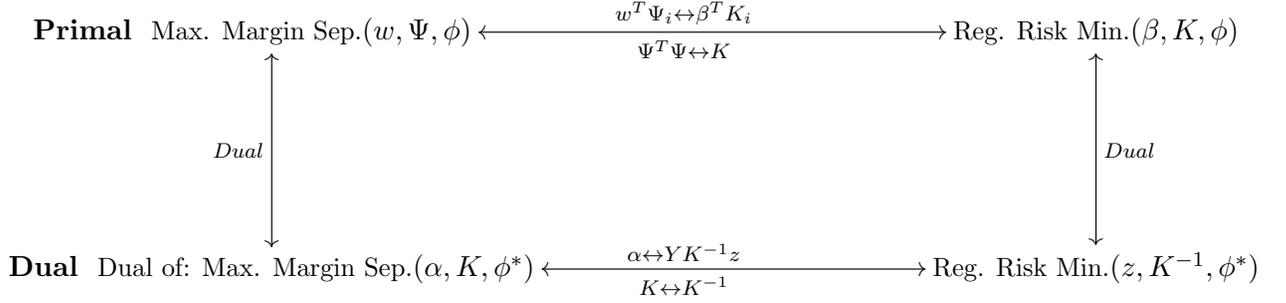

\subsection{Regularization on basis expansion coefficients}
In the previous section we showed how  the Lagrange dual~\cite{book:convex} of the CS-SVM performed \lone regularization on the support vectors. Rather, in this section we show that the Fenchel dual~\Citet{rockafellar70} of the CS-SVM performs \lone regularization on the basis coefficients of the discriminant function. A general regularization problem~\Citet{Tikhonov77} for given dataset $\Dc$, loss function $\Lc$, trade-off hyperparameter $C$, regularizer $\Omega$ and Hilbert space $\Hc$ can be written as 
\begin{eqnarray}
\underset{f \in \Hc}{\text{argmin}} \hspace{0.1in}  \Omega(f) +  \Lc(f;\Dc,C)  \label{eq:tikhonov}
\end{eqnarray} 
which by representer theorem \Citet{Scholkopf2001Learning}, \eqref{eq:regriskSVM} has a minimizer of form of 
\begin{eqnarray}
f({\bf x}) = \sum_{x_i \in \Dc}\beta_iK({\bf x},x_i)+b \label{eq:falpha}.
\end{eqnarray}
which for Hing loss $\phi$, the primal problem becomes \citet{Chapelle-primal}: 
\begin{eqnarray}
\underset{\beta,b}{\text{argmin}} \hspace{0.1in}  \frac{1}{2}\beta^TK\beta +  \sum_i{\phi(y_i ( \beta^TK_i+b))} \label{eq:regriskSVM}
\end{eqnarray}
where $K_i$ is the $i^{th}$ column of the kernel matrix. As shown in the Appendix \ref{app:fenchel}, the Fenchel dual problem of \eqref{eq:regriskSVM} can be written as 
\begin{equation}\label{eq:regriskdual0}
\begin{aligned}
\underset{z}{\text{argmax}}  &&-\Omega^*(g) -  \sum_i{\phi^*(y_ig_i)} 
\end{aligned}
\end{equation} 
which $z \in \Rbb^n$ is dual variable, and $g=K^{-1}z$ is the dual decision function. Figure \ref{fig:comm} depicts the relationship between problem \eqref{eq:regriskdual1} for the existing SVM formulations.

As shown in the Appendix \ref{app:fenchel}, the CS-SVM dual problem can be written as a regularized risk minimization problem
\beq\label{eq:regriskdual5}
\underset{z}{\text{argmax}}  &&-\Omega^*(g) -  \sum_i \phibp^*(y_ig_i) - (1-\kappa) \onenorm{g^-}
\eeq
which $g^-$ is a vector of $g_i$s which $y_{i}=-1 $. 

Also, by substituting $\Omega^*, \ \phibp^*$ (see Appendix  \ref{app:fenchel}) and setting \footnote{Note that $f(x_i)=K_i^T\beta$ and $g_i=g(x_i)=z^TK^{-1}_i$ are primal and dual decision functions.} $g=K^{-1}z$ we have
\beq\label{eq:regriskdual1} 
&\underset{z}{\text{argmax}}  &&- \frac{1}{2} z^TK^{-1}z +  y^Tg - (1-\kappa)\onenorm{g^-} \\
&\text{subject to }&& \onenorm{ 
g^+} =\onenorm{g^-}\\
 &&&  0\preceq g^+ \preceq CC_1 \\
 &&&  0\preceq -g^- \preceq \frac{C}{\kappa}
\eeq
There are several points to make:
\begin{itemize}
	\item By setting $YK^{-1}z=Yg=\alpha$, we can retrieve the SVM's dual problem \eqref{eq:csdual2}, which by using the fact that $\beta=Y\alpha$ \cite{Chapelle-primal}, we have $g=\beta$ in \eqref{eq:falpha}, \eqref{eq:regriskSVM} and \eqref{eq:regriskdual1}. This reveals an interesting duality property: $f=z$ and $g=\beta$, i.e. primal variable is equal to dual decision function and vice versa.
	\item The \lone regularization term and equality constraint in \eqref{eq:regriskdual1} can be regarded w.r.t either basis expansion coeeficients $\beta$ or dual decision values $g$.
\end{itemize}

Compared to CI-SVM and BP-SVM dual problems, problem \eqref{eq:regriskdual1} performs an extra \lone regularization on the basis expansion coefficients which has a different interpretation in imbalanced data learning and cost-sensitive learning:
\paragraph{Imbalanced Data Learning} 
The quantities $\znorm{\beta^+}$ and $\znorm{\beta^-}$ reflect the number of basis functions of each class which contribute the decision function, which similar to their $\alpha$-counter parts they are highly imbalanced, i.e., $\znorm{\beta^+} \ll \znorm{\beta^-}$.
This means that the discriminant function is mostly made up of data-dependent kernel bases of the majority class.
which leads to over train on the majority class while under training the minority class. 
Similar to basis pursuit ~\Citet{chen01}, CS-SVM adds a \lone regularization on the basis expansion coefficients to alleviate the problem of over-training on the majority class by balancing the number of basis functions contributing to the decision function.

\paragraph{Cost-sensitive learning} 
In the cost-sensitive learning setting, errors of misclassifying one class is higher than the other class and we can translate this to the learning algorithm by choosing more basis functions of the target class. This idea can be implemented by performing  \lone regularization on the expansion coefficients of the lower-cost class ($\beta^-$) \eqref{eq:regriskdual1}.

\section{Example-dependent cost-sensitive learning}
In many applications such as computational advertising \cite{Agarwal2011}, medical diagnosis \cite{inductive-cost}, information retrieval \cite{CS-IR}, fraud detection \cite{fraud1,fraud2} and  business decision-making \cite{ED-03} the cost of misclassifying  an individual example differs from other examples including those of the same class. This gives rise to the concept of example-dependent cost-sensitive (ED-CS) learning. 

There main methods to ED-CS learning is \emph{direct cost-sensitive method} \cite{CP-unknown}, which considers a threshold for each example according to its costs, i.e. 
\beq \label{eq:metacost}
h(x_i,C_{1},C_{-1})=
\begin{cases} 
 1,  & \frac{\eta}{1-\eta} \ge \frac{C_{-1}}{C_{1}}\\
 -1,& \text{otherwise}
\end{cases}
\eeq
\emph{MetaCost} \cite{metacost} changes the labels of training set according to \eqref{eq:metacost}, and then trains with the new labels.
\Citet{ED-03} and  \Citet{ED-SVM} proposed methods where the training examples are resampled according to the example cost probability distribution of the data.
Despite their simplicity, resampling methods  may suffer from over fitting caused by duplicate examples.
More recently, ~\Citet{scott11} proposed, but did not to implemented, an example-based version of BP-SVM loss function which we call ED-BP-Hinge. The ED-BP-Hinge loss is defined for each example with label $y$, decision value $f$ and cost $c$ as

\begin{eqnarray}  \label{eq:EDBPH}
\phi(y,f,c)= c\lfloor1-yf\rfloor_+
\end{eqnarray}  

In dealing with the example dependent cost sensitive learning problem we extend the CS-SVM loss of (\ref{eq:CSlossSVMmm})  to the ED-CS-Hinge defined as 
%To inheret the advantages of CS-Hinge, we generalize CS-Hinge to ED-CS-Hinge:
\begin{eqnarray}  \label{eq:EDCSH}
\phi(y,t,c)=
\begin{cases} 
 c\lfloor 1-y t\rfloor_+ ,  & \mathrm{for} \ \ y=+1,\\
 \lfloor 1-(2c-1)yt\rfloor_+ ,& \mathrm{for}\ \ y=-1.
\end{cases}
\end{eqnarray}  
the ED-CS-Hinge loss function inherits the benefits of the CS-SVM loss including the added flexibility of choosing an asymmetric margin of the loss  when compared to the ED-BP-Hinge. In the experimental study we implement an example dependent cost sensitive SVM based on the ED-CS-Hinge loss and show an improvement over the ED-BP-Hinge based SVM and other SVM based algorithms on the KDD98 dataset.

\section{Performance measure}
The evaluation of cost sensitive algorithms requires a flexible performance measure that can incorporate different costs and priors. We adopt the cost sensitive zero-one risk which can be written as 
\begin{eqnarray}
\label{eq:RCS} 
&R_{CS} &=   E_{Y,{\bf X}} [L_{C_1,C_{-1}}(f({\bf x}),y)|{\bf X} = {\bf x}]  \nonumber \\          
&&      = \sum_{y} \sum_{{\bf x}} P_{{\bf X | Y}}({\bf X} = {\bf x}|Y=y)P_{\bf Y}(y) L_{C_1,C_{-1}}(f({\bf x}),y) \nonumber\\
&&      = \sum_{y} P_{\bf Y}(+1) \sum_{{\bf x}} P_{{\bf X|Y}}({\bf X} = {\bf x}|Y=+1) L_{C_1,C_{-1}}(f({\bf x}),+1)  \nonumber\\
&&   + \sum_{y} P_{\bf Y}(-1) \sum_{{\bf x}} P_{{\bf X|Y}}({\bf X} = {\bf x}|Y=-1) L_{C_1,C_{-1}}(f({\bf x}),-1)  \nonumber\\
&&	= P_{1}C_1P_{FN} +P_{-1}C_{-1}P_{FP}
\end{eqnarray}
where $P_1$ and $P_{-1}$ are the class priors and $P_{FN}$ and $P_{FP}$ are the false negative and false positive rates respectively.
This performance measure readily simplifies to the well known probability of error measure $R_{CI}=P_1P_{FN} + P_{-1}P_{FP}$, which we call cost insensitive risk.

Finding the best cost sensitive zero-one risk of (\ref{eq:RCS}) can be as an instance of vector optimization problem. Each classifier produces a set of vectors $(P_{FP},P_{FN}) $ which should be compared w.r.t. in nonnegative orthant, i.e., $\Rbb^2_+$) which induces component wise inequality in $\Rbb_+^2$. The minimal elements of this set comprise the Pareto optimal frontier \cite{book:convex} which is also known as the ROC curve in detection theory. Different points on the ROC of a classifier can be found by the vector scalarization optimization problem of
\begin{eqnarray}
\label{eq:scale} 
&&    \min_{P_{FP},P_{FN}}\   \lambda_1P_{FP}+\lambda_2P_{FN}
\end{eqnarray}
Choosing $(\lambda_1,\lambda_2)=(P_1C_1,P_{-1}C_{-1})$ results in the following optimization problem 
\begin{eqnarray}
\label{eq:scaleRisk}
&&    \min_{P_{FP},P_{FN}}\   P_{1}C_{1}P_{FP}+P_{-1}C_{-1}P_{FN}.  
\end{eqnarray}
which has an objective function equal to the cost sensitive zero-one risk of (\ref{eq:RCS}). This means that by using the cost sensitive zero-one risk as the performance measure and choosing a certain $(P_1C_1 , P_{-1}C_{-1})$ we are in fact finding a certain optimal point on the classifier ROC curve that corresponds to $(\lambda_1,\lambda_2)=(P_1C_1 , P_{-1}C_{-1})$. We use the term minimum risk instead of minimum cost-sensitive zero-one risk in the rest of the paper.

When the $(P_1C_1 , P_{-1}C_{-1})$ are known, we simply use them in the evaluation of the classifier as well as finding the best threshold Figure \ref{fig:roc}.

When the costs or priors of a problem are not known, a single point on ROC curve might not be a robust performance measure for the classifier. So we evaluate the risk at all points  within a low $FP$ or low $TP$of the ROC. This is equivalent to finding the  $t$-AUC \cite{asym-svm} which evaluates the area under the ROC curve within the $1$ to $t$ true negative regions. we extend this method and propose the TP-$t$-AUC and TN-$t$-AUC to evaluate the area under the ROC curve within the $1$ to $t$ true positive and $1$ to $t$ true negative regions respectively.
In the experiments we specifically report both the TP-$t$-AUC and TN-$t$-AUC in order to demonstrate the CS-SVM's ability in learning models with both high sensitivity and high specificity.

\begin{figure*}[t]
  \centering \label{fig:roc}
    \begin{tabular}{cc}
     \includegraphics[ width=2in]{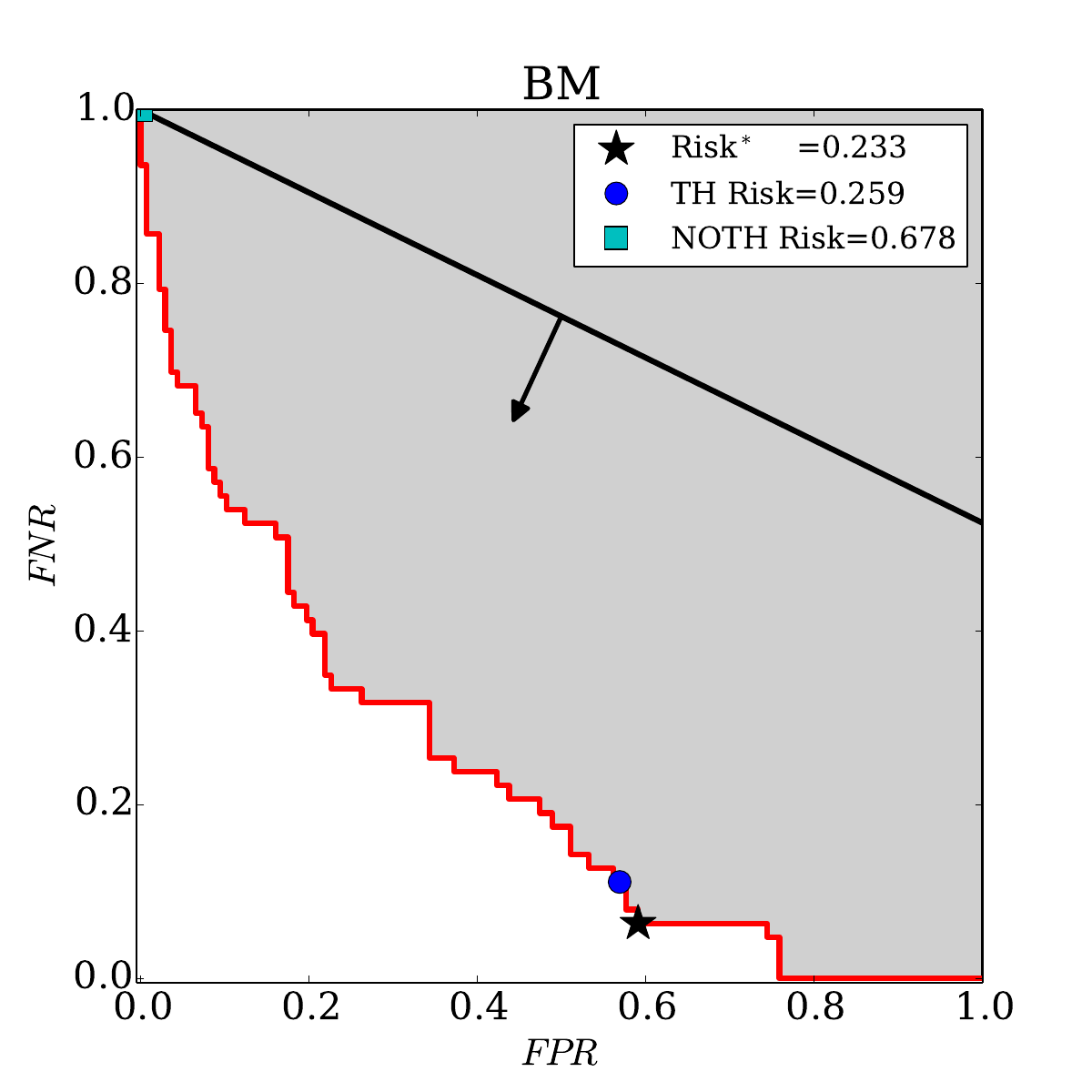}
    \includegraphics[width=2in]{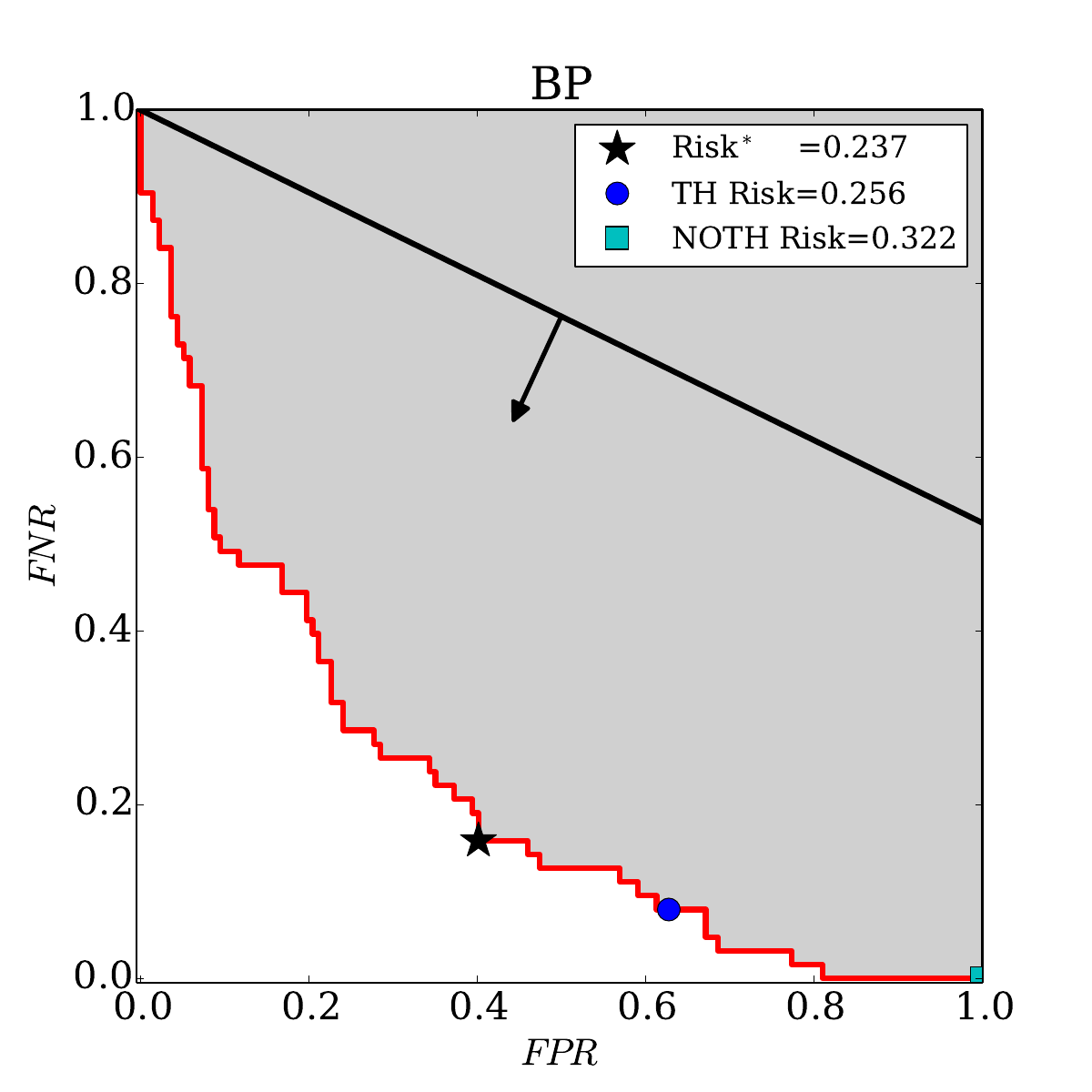}
    \includegraphics[ width=2in]{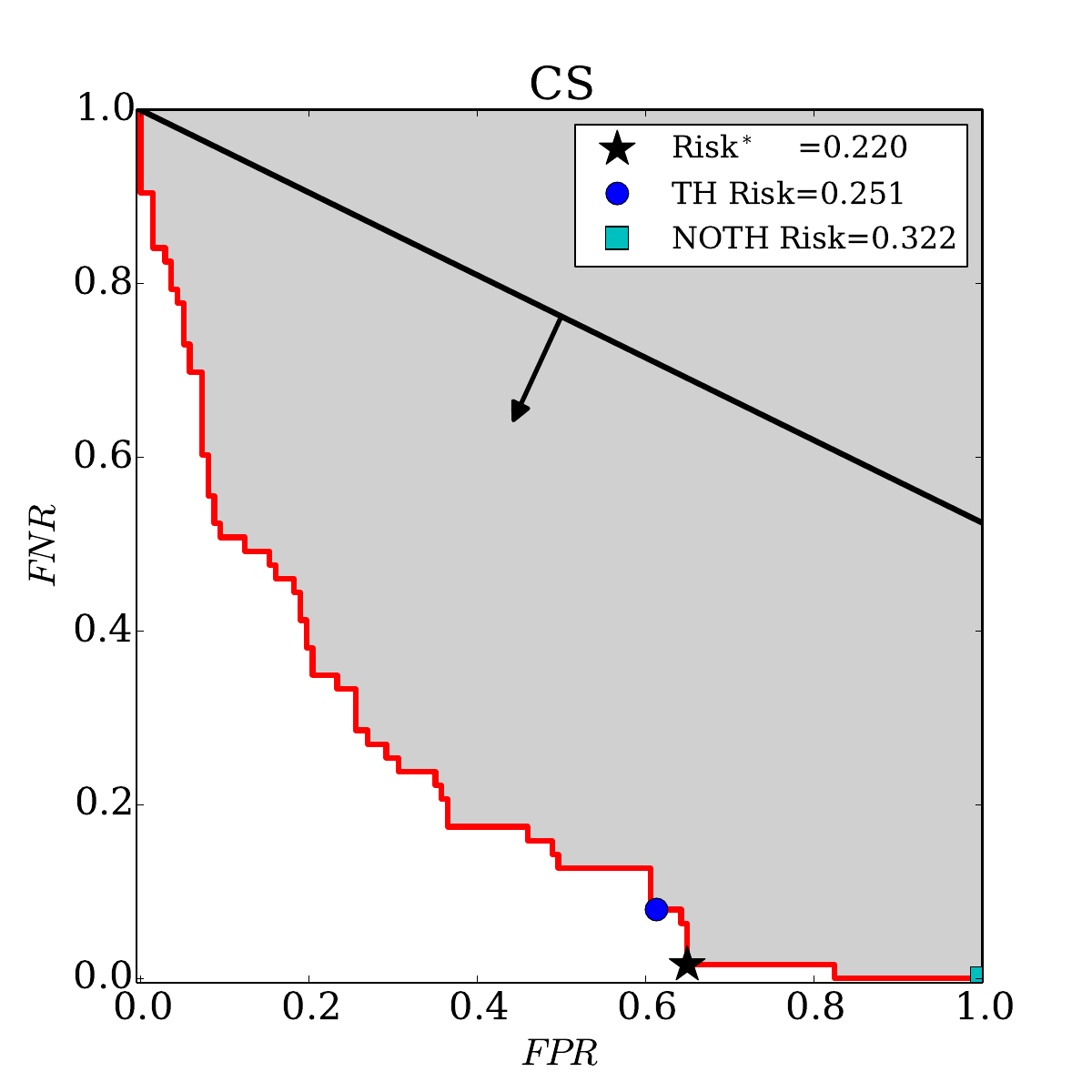}
 \end{tabular}
\caption{ROC curve of CI-SVM (left), BP-SVM (middle), and CS-SVM (right) on test examples of german dataset which costs are known. For given costs, the objective function is depicted with a black line and the best operating points is shown by Risk$^*$. Also, risks associated with the models which tuned by a threshold in the training phase is denoted by TH-Risk and risk of the models without thresholding is also shown (NOTH-Risk).}
\end{figure*}

\section{Experimental study}
In this section we conduct extensive experiments on 21 real world datasets and compare the BM-SVM, BP-SVM and CS-SVM algorithms.  The experiments are grouped into four types namely cost-sensitive learning with available class-dependent costs (CSA), cost-sensitive learning when class-dependent costs are unavailable(CSU), cost-sensitive learning with example-dependent costs (CSE) and imbalanced dataset learning(IDL). 
The datasets and experiments are further explained  in the following sections.

\subsection{Datasets}
21 datasets, created from 20 distinct datasets, are used to compare the performance of the CS-SVM algorithm with other algorithms under different scenarios.   
Table \ref{tab:datasets} shows the detailed specifications of each dataset. Each dataset is associated with a type of experiment. For example, the  KDD98 dataset is used in the CSE experiment and datasets  with large class imbalance ratios are used in IDL experiments. For each dataset we choose the class with the higher cost or fewer data points as the target or positive class. All multi-class datasets were converted to binary datasets. In particular, the binary datasets SIAM(1) and SIAM(2) are datasets which have been constructed from the same multi-class dataset but with different target class and thus different imbalance ratios. \footnote{ SIAM, Web Spam, IJCNN, MNIST, KDD99 and Covertype data sets were obtained from the LIBSVM data website.\href{http://www.csie.ntu.edu.tw/~cjlin/libsvmtools/datasets} {http://www.csie.ntu.edu.tw/~cjlin/libsvmtools/datasets} }

\begin{table*}[t]
\caption{ Specifications of the benchmark datasets. \# of Ex. is the number of example data points. \# of Feat. is the number of features. Ratio is the class imbalance ratio. Target specifies the target or positive class. Type specifies the type of experiment conducted on this dataset. } \centering \small
\begin{tabular}[t]{|l|c|c|c|c|c|}
\hline
Dataset & \# of Ex. & \# of Feat. & Ratio & Target & Type \\
\hline
German Credit & 1,000 & 24    & 1:2   & Bad (2) & CSA \\
Heart& 270   & 13    & 1:1   & Presence (2) & CSA \\
\textbf{KDD 99 (Intrusion Detection)} & 5,209,460 & 42    & 1:4   & Normal & CSA \\
\hline
\textbf{KDD 98 (Donation)} & 191,779 & 479   & 1:20  & 2     & CSE \\
\hline
Breast Cancer Diagnostic & 569   & 32    & 1:2   & Malignant (M) & CSU \\
Breast Cancer Original & 699   & 10    & 1:2   & Malignant (4) & CSU \\
Diabetes & 768   & 8     & 1:2   & Has Diabet (+1) & CSU \\
Echo-cardiogram & 132   & 12    & 1:2   & Alive (1) & CSU \\
Liver  & 345   & 6     & 1:1   & 1     & CSU \\
Sonar & 208   & 60    & 1:1   & +1    & CSU \\
Tic-Tac-Toe & 958   & 9     & 1:2   & Negative & CSU \\
\textbf{Web Spam} & 350,000 & 254   & 1:2   & -1    & CSU \\
\hline
Breast Cancer Prognostic & 198   & 34    & 1:3   & Recur ( R ) & IDL \\
\textbf{Covertype} & 581,012 & 54    & 1:211 & Cottonwood/Willow(4) & IDL \\
Hepatits & 155   & 20    & 1:4   & Die (1) & IDL \\
\textbf{IJCNN} & 141,691 & 2     & 1:10  & +1    & IDL \\
\textbf{Isolet} & 7,797 & 617   & 1:25  & K (11) & IDL \\
\textbf{MNIST} & 70,000 & 780   & 1:10  & 5     & IDL \\
%\textbf{RCV1-21} & 804,414 & 47236 & 1:105 & 21,33 & IDL \\
%\textbf{RCV1-41} & 804,414 & 47236 & 1:10181 & 41,40,59 & IDL \\
\textbf{SIAM1} & 28,596 & 30438 & 1:2000 & 1,6,7,11 & IDL \\
\textbf{SIAM11} & 28,596 & 30438 & 1:716 & 11,12 & IDL \\
Survival & 306   & 3     & 1:3   & 2     & IDL \\
\hline
\end{tabular}
\label{tab:datasets}
\end{table*}

\subsection{Setup}
The RBF Gaussian kernel $k(x,x^\prime)=\exp {- \gamma{\|x-x^\prime\|}^2}$ is used for all SVM algorithms. We choose the  hyper parameters of $C$ and $\gamma$ by performing a 2D grid search and optimizing the associated performance measure (risk, TP/TN-$t$-AUC or income). Given that the size of the datasets are very different, we avoid over fitting by considering a specific search range and granularity for each dataset, but use the same range and granularity for all algorithms. In each iteration of the grid search, the performance is evaluated by 10 fold cross-validation for small datasets and evaluated on a separated validation set for large datasets which appear in bold font in Table \ref{tab:datasets}. Once the 2D grid search is complete, the hyper parameters are used to train the BM-SVM. Also, the kernel hyper parameter is used for training both the BP-SVM and the CS-SVM.

Without loss of generality, we set $C_{-1}=1$  in the BP-SVM experiments. Therefore, when considering the BP-SVM experiments we only need to perform an additional 2D grid search for $C$ and $C_{1}$. The CS-SVM actually has four independent hyper parameters, including $\gamma$.  We perform a 3D grid search on $C$, $C_1$ and $\kappa$ when the costs are not known, and a 2D search on $C$ and $\kappa$ when the costs $C_1$ and $C_{-1}$ are available. Note that in the case of available costs (CSA), setting $\kappa$ to a value other than $\kappa=\frac{1}{2C_{-1}-1}$  implicitly means that $C_{-1}$ is set to a value other than its determined value. However, we deliberately allow this in order to make use of the CS-SVM algorithm's asymmetric margin advantages. Nevertheless, we use the determined cost of $C_{-1}$ during performance evaluation. \footnote{ The source code for CS-SVM is available at \href{http://www.svcl.ucsd.edu/projects/CostLearning/}{http://www.svcl.ucsd.edu/projects/costlearning} }.
Finally, we use the  TP-0.9-AUC and TN-0.9-AUC performance measures when considering the IDL and CSU type experiments since the costs are not explicitly known in these experiments.

\subsection{Implementation}
The CS-SVM problem \eqref{eq:csdual2} is readily implemented in the dual by modifying the LibSVM \Citet{LibSVM} source code. This is done by 1) adding the regularization term to the LibSVM objective function and 2) selecting $C_1=C_1$ and $C_{-1}=\frac{1}{\kappa}$ as the cost parameters. As a result, $C$, $\gamma$, $C_1$ and $\frac{1}{\kappa}$ are the CS-SVM solver hyper parameters.

\subsection{Experiments on cost-sensitive learning with known class-dependent costs}
For these set of experiments, we compare test Risk of datasets corresponding to the point on ROC curve which determined by the threshold that is found in the training phase (Figure \ref{fig:roc}). Three datasets with known class costs are examined. Namely, the German credit card dataset \Citet{PerceptSVM,UCI}, the Statlog Heart Disease \Citet{UCI} and KDD99  \Citet{kdd99-results} datasets are considered. The minimum risk using the BM-SVM, BP-SVM and CS-SVM is shown in Table \ref{tab:risk} for each of the CSA datasets.
The CS-SVM algorithm outperforms the BP-SVM on all datasets, surpasses the BM-SVM on two and ties with the BM-SVM on one dataset.

\begin{table*}[t]
\caption{Expected risk of datasets with known class-dependent costs.} \centering \small
\begin{tabular}[t]{|l|c|c|c|}
\hline
   Dataset &    BM-SVM &    BP-SVM &    CS-SVM \\
   \hline
   German Credit &   0.26 &   0.6 & {\bf 0.25} \\
   Heart &   {\bf 0.09} & 0.1&  {\bf 0.09} \\ 
   KDD 99 &   0.054 &   0.054 & {\bf 0.045} \\    
\hline
\end{tabular}
\label{tab:risk}
\end{table*}

\subsection{Experiments on cost-sensitive learning with unknown class-dependent costs}
We consider eight datasets which do not have known costs and are not highly imbalanced. Namely, we examine the Breast Cancer Diagnostic, Breast Cancer Original, Pima Indian Diabets, Echo-cardiogram, Liver, Sonar,  Tic-Tac-Toe \Citet{UCI} and Web Spam \Citet{webspam} datasets.
The CS-SVM exhibits improved  TP-0.9-AUC (Table \ref{tab:TP-UCD}) and TN-0.9-AUC (Table \ref{tab:TN-UCD}) performance compared to BP-SVM and BM-SVM in 15 out of 16 experiments and ties in one experiment. 

\begin{table*}[t]
\caption{TP-0.9-AUC on datasets  with unknown class costs.} \centering \small
\begin{tabular}[t]{|l|c|c|c|}
\hline
   Dataset &    BM-SVM &    BP-SVM &    CS-SVM \\
    \hline
    Breast Cancer D. & 0.33  & 0.19  & \textbf{0.16} \\
    Breast Cancer O. & \textbf{0.03} & \textbf{0.03} & \textbf{0.03} \\
    Diabetes & 0.36  & 0.37  & \textbf{0.34} \\
    Echo-cardiogram & 0.43  & 0.48  & \textbf{0.35} \\
    Liver  & 0.921 & 0.921 & \textbf{0.920} \\
    Sonar & 0.40  & 0.40  & \textbf{0.38} \\
    Tic-Tac-Toe & 0.97  & 0.90  & \textbf{0.88} \\
    Web Spam & 0.03  & 0.02  & \textbf{0.01} \\
\hline
\end{tabular}
\label{tab:TP-UCD}
\end{table*}

\begin{table*}[t]
\caption{TN-0.9-AUC on datasets  with unknown class costs.} \centering \small
\begin{tabular}[t]{|l|c|c|c|}
\hline
   Dataset &    BM-SVM &    BP-SVM &    CS-SVM \\
    \hline
    Breast Cancer D. & 0.40  & 0.35  & \textbf{0.31} \\
    Breast Cancer O. & 0.17  & 0.17  & \textbf{0.16} \\
    Diabetes & 0.69  & 0.67  & \textbf{0.66} \\
    Echo-cardiogram & 0.60  & 0.60  & \textbf{0.35} \\
    Liver  & 0.90  & 0.95  & \textbf{0.88} \\
    Sonar & 0.70  & 0.62  & \textbf{0.60} \\
    Tic-Tac-Toe & 0.93  & 0.87  & \textbf{0.86} \\
    Web Spam & 0.03  & 0.03  & \textbf{0.02} \\
\hline
\end{tabular}
\label{tab:TN-UCD}
\end{table*}

\subsection{Experiments on imbalanced data learning}
We examine large datasets with severe imbalance ratios to evaluate the merit of the proposed CS-SVM algorithm on imbalanced data learning which could be the most prevailing problem in practice. The CS-SVM exhibits improved  TP-0.9-AUC (Table \ref{tab:TP-UCD}) and TN-0.9-AUC (Table \ref{tab:TN-UCD}) performance compared to BP-SVM and BM-SVM in 17 out of 18 IDL experiments and ties in one experiment.

\begin{table*}[t]
\caption{ TP-0.9-AUC on imbalanced datasets.} \centering \small
\begin{tabular}[t]{|l|c|c|c|c|}
\hline
    Dataset 			&    BM-SVM &    BP-SVM &    CS-SVM \\
    \hline
    Breast Cancer P. & 0.83  & 0.79  & \textbf{0.76} & IDL \\
    Covertype & 0.034 & 0.020 & \textbf{0.016} & IDL \\
    Hepatits & 0.56  & 0.40  & \textbf{0.36} & IDL \\
    IJCNN & 0.091 & 0.034 & \textbf{0.031} & IDL \\
    Isolet & 0.86  & 0.19  & \textbf{0.10} & IDL \\
    MNIST & 0.053 & 0.019 & \textbf{0.017} & IDL \\
    SIAM1 & 0.76  & 0.30  & \textbf{0.29} & IDL \\
    SIAM11 & \textbf{0.70} & \textbf{0.70} & \textbf{0.70} & IDL \\
    Survival & 0.89  & 0.88  & \textbf{0.87} & IDL \\
\hline
\end{tabular}
\label{tab:TP-IDL}
\end{table*}

\begin{table*}[t]
\caption{ TN-0.9-AUC on imbalanced datasets.} \centering \small
\begin{tabular}[t]{|l|c|c|c|c|}
\hline
    Dataset 			&    BM-SVM &    BP-SVM &    CS-SVM \\
    \hline
    Breast Cancer P. & 0.87  & 0.81  & \textbf{0.80} & IDL \\
    Covertype & 0.062 & 0.062 & \textbf{0.060} & IDL \\
    Hepatits & 0.70  & 0.70  & \textbf{0.67} & IDL \\
    IJCNN & 0.02  & 0.02  & \textbf{0.01} & IDL \\
    Isolet & 0.86  & 0.19  & \textbf{0.10} & IDL \\
    MNIST & 0.05  & 0.02  & \textbf{0.02} & IDL \\
%    RCV1-21 &       &       &       & IDL \\
%    RCV1-41 &       &       &       & IDL \\
    SIAM1 & 0.938 & 0.526 & \textbf{0.525} & IDL \\
    SIAM11 & 1.000 & 0.748 & \textbf{0.739} & IDL \\
    Survival & 0.66  & 0.64  & \textbf{0.63} & IDL \\
\hline
\end{tabular}
\label{tab:TN-IDL}
\end{table*}
 
\subsection{Experiments on cost-sensitive learning with example-dependent cost}
We study example-dependent cost-sensitive learning using the well known KDD98 dataset. This dataset contains information about past contributors to charities.
The task is to classify individuals as donors or non-donors for a new charity so that overall donations are maximized. The cost of sending mail and soliciting a donation is $0.68$\$ and the range of possible donations is $1-200$\$. We use the total profit performance measure ~\Citet{Elkan2001Foundations} and evaluate the algorithms according to the benefit matrix shown in Table \ref{tab:benefit}.

\begin{table*}[t] 
\caption{ Benefit matrix for the KDD98 dataset.} \centering \small
\begin{tabular}[t]{l|c|c} 
										&    Donor&    Non-donor\\
\hline   
     Predicted Donor&   ${C_{+1}}_{i}$\$ & $-0.68$\$ \\
\hline
  Predicted Non-donor&  $-{C_{+1}}_{i}$\$ &  $0$\$ \\
\end{tabular} 
\label{tab:benefit}
\end{table*}

A range of different methods and algorithms have been previously used on this dataset and some of the most profitable methods are listed in Table \ref{tab:income} and  further explained. \cite{Wong2005} proposed an ad-hoc algorithm which extracts Focused Association Rules (FAR) for the KDD98 dataset. The FAR method consist of  three subsequent algorithms of rule generating, model building and pruning and yields the best profit on the KDD98 dataset. The example dependent MetaCost (ED-MetaCost) and direct cost-sensitive method (DCSM) are both implemented by \cite{CP-unknown} and differ in  the method used for cost and probability estimation. 
Res-DIPOL and Res-ED-BP-SVM \cite{PerceptSVM} are resampling based algorithms equipped with DIPOL and ED-BP-SVM algorithms respectively. For these methods the dataset is resampled according to a modified probability distribution. 
\cite{ED-03} suggest two types of algorithms for cost sensitive learning. The first type are those that directly incorporate the costs into the learning algorithm and the second type are black box methods that convert a cost insensitive algorithm into a cost sensitive algorithm by resampling the data according to the example costs. The Polynomial kernel ED-BP-SVM (P-ED-BP-SVM) directly incorporates the costs into the learning algorithm while the proposed black box SVM (BB-CI-SVM) and black box C4.5 (BB-C4.5) are examples of the second type proposed in \cite{ED-03}.

Table \ref{tab:benefit} also shows results for the example dependent implementations of BM-SVM (ED-BM-SVM), BP-SVM (ED-BP-SVM) and CS-SVM (ED-SV-SVM) with Gaussian kernels. The ED-CS-SVM exhibits the best performance  among all ED-SVM methods.  It also ranks fifth among all methods some of which use complicated and compounded schemes.

\begin{table*}[t] 
\caption{ Income of different algorithms on the KDD98 dataset.} \centering \small
\begin{tabular}[t]{|c|c|c|c|} 
\hline
Rank &Algorithm&    Income& Comments \\ \hline
   1 & FAR & \$ 20,693  & Ad-hoc method based on sequence of three algorithms \\ \hline 
   2 & DCSM& \$ 15,329  &   Probability and cost estimation to minimize  cost \\ \hline 
   3 & BB-C4.5 & \$ 15,016  &  C4.5 on resampled dataset  \\ \hline 
   4 & KDD-Cup 98 Winner & \$ 14,712  &   Rule-based approach\\ \hline 
   5 & ED-CS-SVM &  \$14,205  &  ED-CS-SVM with Gaussian kernel $\kappa=0.97$ \\ \hline 
   6 & ED-MetaCost & \$ 14,113  &   Probability and cost estimation to minimize  cost \\ \hline 
   7 & ED-BP-SVM &  \$14,008  &  ED-BP-SVM with Gaussian kernel \\ \hline 
   8 & Res-DIPOL & \$ 14,045  &  DIPOL on resampled dataset \\ \hline
   9 & P-ED-BP-SVM & \$ 13,683  & ED-BP-SVM with Polynomial Kernel \\ \hline 
   10& BB-SVM & \$ 13,152  &  CI-SVM on resampled dataset \\ \hline 
   11& Res-ED-BP-SVM & \$ 12,883  &  ED-BP-SVM on resampled dataset \\ \hline 
   12& BM-CI-SVM & \$ 10,560  &  Standard SVM \\ \hline 
   13& Null Classifier & \$ 10,560  &  Predicts all examples as donor \\ \hline
\end{tabular}
\label{tab:income}
\end{table*}

\section{Conclusion}
In this work, we have extended the recently introduced probability 
elicitation view of loss function design to the cost sensitive 
classification problem. This extension was applied to the
SVM problem, so as to produce a cost-sensitive hinge loss function.
A cost-sensitive SVM learning algorithm was then derived, as the minimizer
of the associated risk. Unlike previous SVM algorithms, the one now
proposed enforces cost sensitivity for both separable and non-separable 
training data, enforcing a larger margin for the preferred class, independent of the choice of slack penalty. It
also offers guarantees of optimality, namely classifiers
that implement the cost-sensitive Bayes decision rule and approximate 
the cost-sensitive Bayes risk. The dual problem of CS-SVM is studied and connections between cost-sensitive learning and regularization theory and sensitivity analysis are established. Minimum expected cost-sensitive risk is considered as a metric for evaluating the performance of binary classifiers in the cost-sensitive and imbalanced data settings. The CS-SVM is also readily extended to  cost-sensitive learning with example-dependent costs. Empirical evidence confirms its superior 
performance, when compared to previous methods.

\appendix
\section{Fenchel Dual Problem} \label{app:fenchel}  
\bt[Fenchel Dual of the Regularized risk Minimization Problem] Let $\Omega: \Rbb^n \rightarrow \Rbb $ and $\phi: \Rbb \rightarrow \Rbb_+ $ be convex functions and $\dom \Omega = \dom \Phi =  \Rbb^n$, then
\beq\label{eq:fen0} 
\inf_{\beta}\{\Omega(K\beta)+\sum_i \phi(y_i K_i^T\beta)\} = \sup_{z} \{-\Omega^*(K^{-1}z)-\sum_i \phi^*(y_iz^TK^{-1}_i)\}
\eeq
which $\beta$ and $z$ are primal and dual variables, and $\Omega^*$ and $\phi^*$ are Fenchel Conjugate functions\footnote{The Fenchel conjugate of $h:\Rbb^n \rightarrow \Rbb$ is defined as
\beqq
h^*(y)=\sup_{x \in \dom{h}}\{y^Tx - h(x)\} 
\eeqq} 
of $\Omega$ and $\Phi$,respectively.
\et
\bprf
\begin{enumerate}[(i)]
	\item By the representer therem we have $f(.)=K\beta$.
	\item Fenchel Duality Theorem \cite{rockafellar70} and induction we have 
	\beq \label{eq:fen:rock}
\inf_{f}\{\Omega(f)+\sum_i \phi_i(f)\} = \sup_{g} \{-\Omega^*(g)-\sum_i \phi_i^*(g)\}
\eeq
where here $f$ and $g$ are primal and dual decision functions.
\item Let $\Phi:\Rbb^n \rightarrow \Rbb^n$, then we can write
\beq\label{eq:phiv} 
\sum_i \phi(y_i K_i^T\beta) = \bfone^T \Phi(YK\beta) 
\eeq

\item Composition with linear transformation can be conjugated by \footnote{Note that, $Y$ is a diagonal matrix with $y_{ii}\in\{-1,1\}$. So we have $Y=Y^{-1}$.} \cite{book:convex} 
\beq \label{eq:fen-comp}
\Omega(f)&=\Omega(K\beta) \Rightarrow \Omega^*(g)=\Omega^*(K^{-1}z)\\
\Phi(f)&=\Phi(YK\beta) \Rightarrow \Phi^*(g)=\Phi^*(YK^{-1}z)
\eeq
where $f$ and $g$ are primal and dual decision functions and $\beta$ and $z$ are primal and dual variables, respectively.
\item By \eqref{eq:fen-comp}, we have $g(.)=K^{-1}z$
\item From \eqref{eq:fen:rock} and \eqref{eq:phiv} we have
\beq
\bfone^T \Phi(YK^{-1}z)= \sum_i \phi^*(y_i z^TK^{-1}_i) 
\eeq

\end{enumerate}
\eprf

Conjugate of regularizer \footnote{$\Phi$ and $\Phi^*$ are both Tikhonov regularization in $\Hc$ and $\Hc^*$ with kernels $K$ and $K^{-1}$ respectively, i.e., $\Omega(f)=\frac{1}{2}\|f\|^2_\Hc=\frac{1}{2}f^TK^{-1}f$ and $\Omega^*(g)= \|g\|^2_{\Hc^*} =\frac{1}{2}z^TK^{-1}	z$} of $\Omega(K\beta)= \frac{1}{2}\beta^TK	\beta$ is given by 
\beqq
\Omega^*(K^{-1}z)=\sup_{\beta}\{z^TK^{-1}K\beta- \frac{1}{2} \beta^TK\beta\} = \frac{1}{2} z^TK^{-1}z
\eeqq
For the decision functions with a bias term, i.e. $f(x_i)=K_i^T\beta +b$, the bias is not regularized, and unregularized bias formulation introduces an equality constraint  in the dual (\Citet{rifkin07}, Section 9.1)
\beq \label{eq:eqdual}
\textbf{1}^TK^{-1}z=0
\eeq
Given $a,b\in \Rbb_{++}$, the conjugate of the Hinge loss $\phi(\bfu)=\max(b-a\bfu,0)$, can be computed 
\beqq
g^*(\bfv)=\sup_{\bfu}\{\bfu \bfv- \max(b-ax,0)\}=\begin{cases}
\underset{\bfu}{\sup}\ \{\bfu \bfv\}  \hspace{1in} & \bfu > \frac{b}{a}  \\
    \underset{\bfu}{\sup}\ \{\bfu(\bfv+a)-b\} & \bfu\le \frac{b}{a}
\end{cases}
\eeqq
which we have two cases
\begin{enumerate}[(i)]
	\item $\bfv\le0\ \Rightarrow \ g^*(\bfv)=
	\begin{cases}
\underset{\bfu >\frac{b}{a}}{\sup}\ \{\bfu \bfv\}  = \frac{b}{a}\bfv\\
    \underset{\bfu\le1}{\sup}\ \{\bfu(\bfv+a)-b\}=\left\{ \begin{array}{ll}
         \frac{b}{a}\bfv & -a\le \bfv \le 0\\
        \infty \ \ \ & \bfv<-a\end{array} \right. 
\end{cases}$
\item $\bfv>0\ \Rightarrow \ g^*(\bfv)=
	\begin{cases}
\underset{\bfu >\frac{b}{a}}{\sup}\ \{\bfu \bfv\}  = \infty\\
    \underset{\bfu\le \frac{b}{a}}{\sup}\ \{\bfu(\bfv+a)-b\}=\frac{b}{a}\bfv    
\end{cases}$
\end{enumerate} 
Thus for all $\bfu$, we can write $g^*(\bfv)=I_{[-1,0]}(\bfv)+\bfv$ or equivalently \footnote{This equivalence is legitimate because $\Omega$ is an even function.} $g^*(\bfv)=I_{[0,1]}(\bfv)-\bfv$. Now we derive the conjugate of CI-Hinge, BP-Hinge and CS-Hinge losses specifically:
\paragraph{CI-Hinge}

\beqq
 \phi(\bfu)=C\max(1-\bfu,0)=\max(C-Cu,0)\ \Rightarrow \ \phi^*(\bfv)=I_{[0,C]}(\bfv)-\bfv
\eeqq
\paragraph{BP-Hinge}
\beqq
 {\phibp}_+(\bfu)&=CC_{1}\max(1-\bfu,0)\ \Rightarrow \ {\phibp^*}_+(\bfv)=I_{[0,CC_{1}]}(\bfv)-\bfv\\
 {\phibp}_-(\bfu)&=CC_{-1}\max(1-\bfu,0)\ \Rightarrow \ {\phibp^*}_-(\bfv)=I_{[0,CC_{-1}]}(\bfv)-\bfv
\eeqq

\paragraph{CS-Hinge}
\beqq
 {\phics}_+(\bfu)&=CC_{1}\max(1-\bfu,0)\ \Rightarrow \ {\phics^*}_+(\bfv)=I_{[0,CC_{1}]}(\bfv)-\bfv\\
 {\phics}_-(\bfu)&=C\max(1- \frac{\bfu}{\kappa} ,0)\ \Rightarrow \ {\phics}^*_+(\bfv)=I_{[0,\frac{C}{\kappa}]}(\bfv)- \kappa \bfv
\eeqq
Moreover, since $\bfv\ge0$, for $C_{-1}=\frac{1}{\kappa}$we can write
\beqq
 \phics^*(\bfv)=\phibp^*(\bfv) + (1-\kappa)|\bfv|
\eeqq
and in general we have
\beqq
 \sum_i \phics^*(y_i z^T K_i^{-1})=\sum_i \phibp^*(y_i z^T K_i^{-1}) + (1-\kappa)\|  K^{-1}z^-\|_1
\eeqq
\bibliography{library}

\bibliographystyle{plainnat}
  
\end{document}